\documentclass{article}

\usepackage{microtype}
\usepackage{graphicx}
\usepackage{subcaption}
\usepackage{booktabs} 
\usepackage{atnguyen_icml}

\usepackage{hyperref}

\usepackage[accepted]{icml2026}

\usepackage{amsmath}
\usepackage{amssymb}
\usepackage{mathtools}
\usepackage{amsthm}
\usepackage{mathrsfs}
\usepackage{enumitem}

\usepackage[capitalize,noabbrev]{cleveref}

\theoremstyle{plain}
\newtheorem{theorem}{Theorem}[section]

\newtheorem{innercustomthm}{Theorem}
\newenvironment{customthm}[1]
  {\renewcommand\theinnercustomthm{\textbf{#1}}\innercustomthm}
  {\endinnercustomthm}

\newtheorem{proposition}[theorem]{Proposition}
\newtheorem{lemma}[theorem]{Lemma}
\newtheorem{corollary}[theorem]{Corollary}
\theoremstyle{definition}
\newtheorem{definition}[theorem]{Definition}
\newtheorem{assumption}[theorem]{Assumption}
\theoremstyle{remark}
\newtheorem{remark}[theorem]{Remark}

\usepackage[textsize=tiny]{todonotes}

\icmltitlerunning{Provably Data-driven Lagrangian Relaxation for Mixed Integer Linear Programming}

\begin{document}

\twocolumn[
  \icmltitle{Provably Data-driven Lagrangian Relaxation \\ for Mixed Integer Linear Programming}



  \icmlsetsymbol{equal}{*}

  \begin{icmlauthorlist}
    \icmlauthor{Tung Quoc Le}{grenoble}
    \icmlauthor{Anh Tuan Nguyen}{cmu}
    \icmlauthor{Viet Anh Nguyen}{cuhk}
  \end{icmlauthorlist}

  \icmlaffiliation{grenoble}{Université Grenoble Alpes, LJK, CNRS, Grenoble INP, 38000 Grenoble, France}
  \icmlaffiliation{cmu}{Carnegie Mellon University, Machine Learning Department}
  \icmlaffiliation{cuhk}{Chinese University of Hong Kong, Department of Systems Engineering and Engineering Management}

  \icmlcorrespondingauthor{Anh Tuan Nguyen}{atnguyen@cs.cmu.edu}

  \icmlkeywords{Machine Learning, ICML}

  \vskip 0.3in
]



\printAffiliationsAndNotice{}  

\begin{abstract}
      Lagrangian Relaxation (LR) is a powerful technique for solving large-scale mixed-integer linear programs, particularly those with decomposable structures, such as vehicle routing or unit commitment problems. By relaxing the coupling constraints, LR enables parallel subproblem solving and often yields tighter dual bounds than standard linear programming relaxations, which is crucial for efficient branch-and-bound pruning. While recent empirical work has shown promising results using machine learning to predict these multipliers, a theoretical understanding of such methods remains an open question. In this work, we bridge this gap by analyzing the problem of learning LR through the lens of data-driven algorithm design, i.e., a statistical learning problem over a distribution of problem instances. Our contributions are as follows: first, we derive a generalization bound of $\cO(s^{1.5}/\sqrt{N})$ for the learned multipliers, where $s$ is the number of coupling constraints and $N$ is the sample size. Second, we provide a minimax lower-bound of $\Omega(s/\sqrt{N})$, proving that a linear dependency is unavoidable.  Third, we constructively close this theoretical gap by proving that stochastic gradient ascent with averaging achieves the minimax optimal rate $\Theta(s/\sqrt{N})$. Finally, we extend our framework to the learning-to-warm-start setting, proving that it achieves a fast, minimax-optimal rate of $\Theta(s/N)$ and establishing a theoretical advantage over direct multiplier prediction.
\end{abstract}

\section{Introduction}
Mixed Integer Linear Programming (MILP) \citep{conforti2014integer, wolsey2020integer} is one of the most fundamental problems in optimization literature that plays a critical role in many industrial areas, including supply chains and logistics \cite{laporte1992vehicle}, energy and power systems \citep{carrion2006computationally}, and finance \citep{grossmann2005enterprise}, among others. While modern solvers (e.g., branch-and-cut~\citep{mitchell2002branch}) have made tremendous advances, solving large-scale MILP instances to optimality remains a computationally intensive task due to the combinatorial explosion of the search tree. This necessitates advanced decomposition techniques to accelerate the solving process.  

For many real-world MILP problems, including the Vehicle Routing Problem (VRP)~\citep{toth2014vehicle} and the Unit Commitment Problem~\citep{saravanan2013solution}, the complexity arises from a small set of \textit{coupling} constraints (e.g., shared resources or capacity limits) linking together otherwise independent sub-problems. Lagrangian relaxation (LR) \citep{fisher1981lagrangian} addresses this issue by dualizing these coupling constraints into the objective function. Concretely, let $P = (c, A, b, C, d)$ be a MILP problem instance of the form 
\begin{equation*}
    \textup{OPT}(P) \triangleq
    \left\{
    \begin{array}{cl}       \min &c^\top x \\
        \text{s.t.} &x\in \bbR_+^{m} \times \{0, 1\}^p,~Ax \geq b,~ Cx \geq d,
    \end{array}
    \right.
\end{equation*}
where $Ax \geq b$ represents the coupling constraints. Using $\pi \ge 0$ as the dual variable, LR dualizes the coupling constraints into the objective function as follows
\begin{equation*} 
u(\pi, P) \triangleq \left\{
    \begin{array}{cl}
        \min &c^\top x + \pi^\top (b - Ax) \\
        \textup{s.t.} &x \in \bbR^m_+ \times \{0, 1\}^p,~Cx \geq d.
     \end{array}
     \right.
\end{equation*}

LR offers two important advantages for MILP solvers. First, it decomposes the problem into smaller, often tractable subproblems that can be solved in parallel; see Appendix \ref{apx:vhr-decomposition} for an example of such decomposition in VRP. Second, because $u(\pi, P)\le\textup{OPT}(P)$ and because these subproblems retain their integrity constraints, LR yields a better, often significantly tighter, objective lower-bound than the continuous relaxation (CR) \citep{geoffrion2009lagrangean}. These tighter bounds are instrumental in pruning the branch-and-bound tree and massively accelerating exact MILP solvers.

However, the efficiency of LR relies heavily on how fast we can identify the optimal values of the Lagrangian multipliers $\pi$. Given a problem instance $P$, finding the optimal multipliers that maximize the dual bound, and thus provide the best pruning power, is a non-smooth concave optimization problem. Consequently, the computation cost of finding good multipliers may sometimes outweigh the benefits of a tighter bound.

In many practical settings, the optimization problem instances $P$ are not isolated but appear as related samples coming from an application-specific problem distribution $\cD$. For example, in the VRP case, we may observe daily routing demands on the same road network. Based on this fact, recent \textit{empirical} works~\citep{demelas2024predicting} proposed learning the Lagrangian multipliers $\pi^*(\cD)$ for such problem distribution based on the past problem instances. Given a new test problem instance arriving from the same problem distribution, the learned $\pi^*(\cD)$ can be used to either (1) warm-start the sub-gradient method and drastically reduce the number of iterations required to reach convergence, or (2) approximate the dual bound directly without further re-calculation. However, despite some promising empirical results, this learning-augmented approach currently lacks a rigorous theoretical foundation.

\textbf{Contributions.} In this work, we lay the theoretical foundation for Lagrange multiplier learning by analyzing LR through the lens of data-driven algorithm design~\citep{gupta2020data, balcan2020data}. Concretely, we formalize learning the Lagrangian multipliers as a statistical learning problem and provide the first rigorous sample-complexity guarantee for this task. Our contributions are fourfold:
\begin{enumerate}[leftmargin=*]
    \item First, we derive a generalization guarantee upper-bound of $\cO(s^{1.5}/\sqrt{N})$ for learning the Lagrangian multipliers that maximize the dual objective, where $s$ is the number of coupling constraints, and $N$ is the number of training problem instances.
    \item Second, we establish a generalization minimax lower-bound of $\Omega(s/\sqrt{N})$, demonstrating that a linear dependence on the number of coupling constraints is unavoidable. 
    \item Third, we constructively close the resulting $\cO(\sqrt{s})$ gap by analyzing the Stochastic Gradient Ascent (SGA) with the averaging algorithm, proving that it achieves the minimax optimal rate of $\cO(s/\sqrt{N})$.
    \item Finally, we extend our analysis to the learning-to-warm-start setting. We prove that this formulation admits a fast, minimax-optimal generalization rate of $\Theta(s/N)$, establishing a provable difference in sample complexity compared to direct multiplier prediction.
\end{enumerate}

\textbf{Technical challenges and overview.} To establish the upper and lower bounds for generalization guarantees, we analyze the function class $\cU = \{u_\pi: \cP \rightarrow [-H, H] \mid \pi \in \bbR^s_+\}$, where $u_\pi(P) \triangleq u(\pi, P)$ and $H \in \bbR_+$ is a range bound of the function class. Unlike standard problems in learning theory literature, the structure of the function $u_\pi(P)$ is more complicated because $u_{\pi}(P)$ is inherently defined by an optimization problem. To overcome this challenge, we take the \textit{dual view} perspective from data-driven algorithm design literature \citep{balcan2020data}: we will analyze the function $u_P(\pi) \triangleq u(\pi, P)$ acquired by fixing the MILP problem instance $P$ and treating the Lagrangian multipliers $\pi$ as variables. In doing so, we can exploit the favorable structure of the function class in Section \ref{sec:geometric-property} and then establish an upper-bound for the generalization guarantee for~$\cU$. For the lower bound, we employ a standard reduction in theoretical statistics from estimation to testing, using Fano's method from Theorem~\ref{thm:fano}. To do this, we design a family of \textit{hard} problem distributions distinguished by specific perturbations, controlled by a binary vector $v \in \{0, 1\}^s$, in their objective coefficient $c$. By constructing a hypercube packing of the controlling vector $v$, we induce a family of hard problem distributions $\cD_v \in \Delta(\cP)$ that remain statistically hard to distinguish while their optimal multipliers $\pi^*(\cD_v)$ remain geometrically separated, thereby establishing an information-theoretic lower-bound for the problem of data-driven Lagrangian relaxation learning.

\textbf{Outline.} We structure our paper as follows: Section~\ref{sec:review} discuss the existing literature, while Section~\ref{sec:preliminaries} collects key technical preliminaries. In Section~\ref{sec:settings}, we present the setup of the Lagrangian relaxation learning framework.   We present our main results on generalization guarantees, the minimax lower bound, and the optimal SGA algorithm in Section~\ref{sec:standard-lagrangian-relaxation}, while Section~\ref{sec:warm-start-lagrangian-relaxation} extends these results to the learning-to-warm-start LR setting.

\section{Literature Review} \label{sec:review}

\textbf{Lagrangian relaxation (LR)} is a cornerstone technique for solving large-scale MILP problems, particularly those with decomposable structures \citep{fisher1981lagrangian, lemarechal2001lagrangian}. Unlike standard continuous relaxation that often yields loose bounds, LR maintains integrality constraints of sub-problems, providing tighter dual bounds \citep{geoffrion2009lagrangean} that are crucial for pruning branch-and-bound/cut trees. However, the efficacy of LR hinges on finding good Lagrangian multipliers, often obtained via classical,  computationally expensive methods such as subgradient ascent. This computational bottleneck motivates the use of learning-based approaches to predict high-quality multipliers \citep{demelas2024predicting}. Despite promising empirical results, this method lacks a theoretical foundation, which motivates our work.    

\textbf{Learning to optimize (L2O)} is an active topic in machine learning~\citep{van2024ai4opt, chen2022learning}. Existing approaches L2O mostly fall into two categories: (1) end-to-end learning (i.e., amortized optimization) \citep{amos2023tutorial, kool2018attention, cappart2023combinatorial}, where models attempt to approximate the optimal solution directly, often \textit{lacking feasibility and optimality guarantees}; and (2) solver-integrated learning, which configures specific components of exact solvers, such as branching strategies or cutting planes, to accelerate the solving process while naturally \textit{maintaining feasibility and optimality} \citep{balcan2021sample, balcan2022structural, cheng2025generalization, nguyen2026provably}. Predicting Lagrangian multipliers falls into the second category and has been investigated empirically \citep{demelas2024predicting}. Our paper aims to establish a theoretical foundation for this approach. 

\textbf{Data-driven algorithm design} is a closely related research direction of L2O. Instead of worst-case analysis, it proposes adapting algorithms by tuning their internal parameters or components to a specific problem domain using historical problem instances from the problem distribution~\citep{gupta2020data, balcan2020data}. Data-driven algorithm design is an active line of work in both empirical validations and theoretical analyses across various domains, including sketching and low-rank approximation \citep{indyk2019learning, li2023learning}, tuning regularization parameters in regression models \citep{balcan2023new}, mixed integer linear programming \citep{balcan2022structural, balcan2018learning, cheng2025generalization}, as well as various other generalization frameworks for analyzing their statistical guarantees \citep{balcan2021much, bartlett2022generalization, balcan2025algorithm, balcan2025sample, le2026provably}. Data-driven Lagrangian relaxation can be viewed as a specific instance of data-driven algorithm design.

\section{Preliminaries} \label{sec:preliminaries}
In this section, we will recall the necessary background and formalize the problem of data-driven learning Lagrangian relaxation. 

\subsection{Uniform Convergence and Rademacher Complexity}

To quantify the learning-theoretic complexity of a real-valued function class, we use the Rademacher complexity \citep{bartlett2002rademacher}.
\begin{definition}[Rademacher complexity, \citet{bartlett2002rademacher}]
    Let $\cU$ be a real-valued utility function class of which each function takes inputs from the domain $\cP$. Given a fixed set of samples $S = \{P_1, \dots, P_N\} \subset \cP$, the \textit{empirical Rademacher complexity} of $\cU$ with respect to the set $S$ is defined as
    \[
        \hat{\mathscr{R}}_S(\cU) = \frac{1}{N} \bbE_{\sigma}\left[\sup_{u \in \cU}  \sum_{i = 1}^N \sigma_i u(P_i)\right],
    \]
    where $\sigma = (\sigma_1, \dots, \sigma_N)$ are independent Rademacher variables. Given a distribution $\cD$ over $\cP$, the \textit{Rademacher complexity} defined on $N$ inputs is defined as 
    \[
        \mathscr{R}_N(\cU) = \bbE_{S \sim \cD^N}[\hat{\mathscr{R}}_S(\cU)].
    \]
\end{definition}
The next result connects Rademacher complexity to the expected generalization error. 

\begin{theorem}[Uniform convergence via Rademacher complexity, \citet{bartlett2002rademacher}] \label{thm:generalization-guarantee-via-rademacher}
    Let $\cD$ be a distribution over the problem instance space $\cP$, and let $S = \{P_1, \dots, P_N\}$ be a set of $N$ i.i.d.~samples drawn from $\cD$. For any real-valued function class $\cU$ that takes input in $\cP$, the expected uniform deviation of the empirical mean from the true mean is bounded by
    \[
        \bbE_{S \sim \cD^N}~\Big[\sup_{u \in \cU}\Big(\bbE_{P \sim \cD}[u(P)] - \frac{1}{N}\sum_{i = 1}^N u(P_i)\Big)\Big] \leq 2\mathscr{R}_N(\cU).
    \]
\end{theorem}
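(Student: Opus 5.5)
The plan is the classical symmetrization argument. Write $\hat{\bbE}_S[u] = \frac{1}{N}\sum_{i=1}^N u(P_i)$ and let $S' = \{P_1', \dots, P_N'\}$ be an independent ghost sample drawn i.i.d.\ from $\cD$. First, express the true mean as a conditional expectation, $\bbE_{P\sim\cD}[u(P)] = \bbE_{S'}\big[\hat{\bbE}_{S'}[u] \mid S\big]$. Since a supremum of expectations is at most the expectation of the supremum (Jensen), this gives
\[
\bbE_S\Big[\sup_{u\in\cU}\big(\bbE_{\cD}[u] - \hat{\bbE}_S[u]\big)\Big] \le \bbE_{S,S'}\Big[\sup_{u\in\cU}\frac{1}{N}\sum_{i=1}^N \big(u(P_i') - u(P_i)\big)\Big].
\]
Throughout, I assume enough measurability (or restrict to countable subclasses) so that these suprema are measurable. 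In the paper's application this is harmless, because $\cU$ is uniformly bounded by $H$ and is parametrized by $\pi \in \bbR^s_+$.

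The second step introduces the Rademacher signs. For each fixed $i$, swapping $P_i$ and $P_i'$ does not change the joint distribution of $(S,S')$, because the two samples are i.i.d. Consequently, multiplying the $i$-th difference $u(P_i') - u(P_i)$ by an independent sign $\sigma_i$ leaves the distribution of the whole vector of differences unchanged. Averaging over $\sigma$, the right-hand side therefore equals
\[
\bbE_{S,S',\sigma}\Big[\sup_{u}\frac{1}{N}\sum_i \sigma_i\big(u(P_i') - u(P_i)\big)\Big].
\]

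The final step splits the supremum, using $\sup(a+b) \le \sup a + \sup b$:
\[
\frac{1}{N}\bbE_{S',\sigma}\Big[\sup_u \sum_i \sigma_i u(P_i')\Big] + \frac{1}{N}\bbE_{S,\sigma}\Big[\sup_u \sum_i (-\sigma_i) u(P_i)\Big].
\]
Because $-\sigma$ has the same distribution as $\sigma$, each of these two terms equals $\bbE_S[\hat{\mathscr{R}}_S(\cU)] = \mathscr{R}_N(\cU)$, which gives the bound $2\mathscr{R}_N(\cU)$.

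The only delicate point is the sign-insertion step. One has to justify that the coordinatewise swap invariance holds jointly for all $i$, so that the sign pattern can be any $\sigma \in \{\pm1\}^N$ and the equality holds for each fixed pattern before averaging over $\sigma$. Everything else is routine manipulation of suprema and expectations.
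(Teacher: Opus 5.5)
Your symmetrization argument is correct: the ghost sample, the sign insertion via the measure-preserving swap $P_i \leftrightarrow P_i'$ for each fixed $\sigma$, and the splitting of the supremum together give the bound. It is the standard proof from \citet{bartlett2002rademacher}, which the paper cites without reproducing a proof. Your measurability caveat is also well placed. In the paper's setting it is discharged by the Lipschitz continuity of $\pi \mapsto u(\pi,P)$ (Corollary~\ref{cor:bounded-diam}), which lets every supremum over $\Pi$ be taken over a countable dense subset.
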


\subsection{Information-theoretic Minimax Lower Bound}
Our approach establishes minimax lower bounds for any learning algorithm by reducing the estimation problem to multi-hypothesis testing using \textit{Fano's method}~\citep{wainwright2019high}. The 
idea is to construct a ``hard'' finite family of problem distributions $\cD_{v \in V}$ indexed by $v$ so that
\begin{itemize}[leftmargin=*]
    \item The optimal parameters for different indices are far apart, that is, the parameters are separated by a distance at least $2\delta$ for some $\delta > 0$; and
    \item The distributions themselves are statistically hard to distinguish, that is, they have small Kullback-Leibler (KL) \cite{kullback1951information} divergence values.
\end{itemize}
If the distributions are too similar, no algorithm can reliably identify the correct index from a finite set of samples $S$. Therefore, the algorithm will frequently guess the wrong distribution, incurring an estimation error of at least $\delta$. Fano's method formalizes the idea as follows. 

\begin{theorem}[Fano's method, \citet{wainwright2019high}] \label{thm:fano}
    Let $\Pi$ be a parameter space and let $\{\cD_v: v \in \Pi\}$ be a family of distributions. Let $\mathcal V = \{v^{(1}), \dots, v^{(M)}\} \subset \Pi$ be a $\delta$-packing set of $\Pi$, i.e., $\rho(v^{(i)}, v^{(j)}) \geq 2\delta$ for any $i, j \in \{1, \dots, M\}$ and $i \neq j$. Let $J$ be a random index uniformly distributed over $\{1, \dots, M\}$. Defining the Markov chain $J \rightarrow v_J \rightarrow S \rightarrow \pi$, where $S$ is a set of $N$ samples drawn from $\cD_{v_J}$, then:
    \[
        \inf_{\pi} \sup_{v \in \mathcal V} \bbE\left[\rho(\pi, v)\right] \geq \delta\left(1 - \frac{I(J; S) + \log 2}{\log M}\right),
    \]
    where $I(J; S)$ is the mutual information between the random index $J$ and the samples $S$, $\rho$ is a distance in $\Pi$, and $\pi: \mathcal P^N \rightarrow \Pi$ is an estimator.
\end{theorem}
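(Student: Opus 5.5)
The plan is the standard two-step argument: reduce estimation to multi-way hypothesis testing, then lower-bound the testing error with Fano's inequality. Throughout I use that $\rho$ is a metric, so the triangle inequality is available.

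\emph{Reduction to testing.} Fix any estimator $\pi:\mathcal P^N\to\Pi$. Define the minimum-distance test
\[
\psi(S)\in\arg\min_{j\in\{1,\dots,M\}}\rho\bigl(\pi(S),v^{(j)}\bigr),
\]
with ties broken by smallest index so that $\psi$ is a measurable function of $S$. Suppose $\rho(\pi(S),v^{(J)})<\delta$. Then for every $j\neq J$ the packing property and the triangle inequality give
\[
\rho\bigl(\pi(S),v^{(j)}\bigr)\ge \rho\bigl(v^{(j)},v^{(J)}\bigr)-\rho\bigl(\pi(S),v^{(J)}\bigr)>2\delta-\delta=\delta,
\]
so $\psi(S)=J$. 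Hence the event $\{\psi\neq J\}$ is contained in $\{\rho(\pi,v_J)\ge\delta\}$. Since a supremum dominates an average, Markov's inequality gives
\[
\sup_{v\in\mathcal V}\bbE[\rho(\pi,v)]\;\ge\;\bbE_J\bbE[\rho(\pi,v_J)]\;\ge\;\delta\,\Pr[\rho(\pi,v_J)\ge\delta]\;\ge\;\delta\,\Pr[\psi(S)\neq J].
\]
Taking the infimum over $\pi$ leaves only the testing error to bound, uniformly over all tests $\psi$.

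\emph{Fano's inequality.} Let $P_e=\Pr[\psi\neq J]$ and introduce the error indicator $E=\mathbf 1\{\psi\neq J\}$. Expanding $H(J,E\mid\psi)$ by the chain rule in two ways yields
\[
H(J\mid\psi)\le H(E)+P_e\log(M-1)\le \log 2+P_e\log M.
\]
Because $J$ is uniform, $H(J)=\log M$, and so $H(J\mid\psi)=\log M-I(J;\psi)$. The chain $J\to v_J\to S\to\psi$ is Markov, so the data-processing inequality gives $I(J;\psi)\le I(J;S)$. Combining these,
\[
P_e\ge 1-\frac{I(J;S)+\log 2}{\log M}.
\]
Plugging this into the reduction yields the stated bound.

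The only delicate step is the Fano inequality itself, specifically the chain-rule bookkeeping for $H(J\mid\psi)$ through the auxiliary variable $E$. Everything else is the triangle inequality and data processing. Two minor points also need care: the tie-breaking in $\psi$, needed for measurability, and the use of the strict inequality $\rho<\delta$ to guarantee that the test decodes correctly.
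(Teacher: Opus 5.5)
Your proof is correct and follows the same route as the paper's source: the paper does not reprove this theorem but cites \citet{wainwright2019high}, where it is established just as you do, by reducing estimation to $M$-ary testing via the minimum-distance decoder and the $2\delta$-separation, then lower-bounding the testing error with Fano's inequality and the data-processing bound $I(J;\psi)\le I(J;S)$. Your tie-breaking and your use of the strict inequality $\rho<\delta$ are handled correctly, and the only unstated hypothesis is $M\ge 2$, which you need so that $\log M>0$.
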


To apply Fano's method, a standard practice is to give an upper bound for the mutual information using the KL divergence. For $N$ i.i.d.~samples, the mutual information is bounded by the pairwise KL divergence, i.e.,
    \begin{align} \label{eq:information-kl-upper-bound}
        I(J; S) &\leq \frac{1}{M^2}\sum_{i, j = 1}^M\textup{KL}(\cD_{v^{(i)}}^N  \parallel  \cD_{v^{(j)}}^N)  \notag \\
        & \leq N\max_{i, j} \textup{KL}(\cD_{v^{(i)}} \!\parallel \! \cD_{v^{(j)}}).
    \end{align}

\section{Problem Settings} \label{sec:settings}
We formally define the Mixed Integer Linear Programming (MILP) setting and the corresponding statistical learning problem of data-driven Lagrangian relaxation.

\textbf{Setup.} We consider a general MILP instance $P = (c, A, b, C, d) \in \cP \subset \bbR^{m + p} \times \bbR^{s \times (m + p)} \times \bbR^s \times \bbR^{t \times (m + p)} \times \bbR^t$ in the inequality constraints form: 
\begin{equation*} 
    \textup{OPT}(P)\!\triangleq\!
    \left\{
    \begin{array}{cl}       \min &c^\top x \\
        \text{s.t.} &x\in \bbR_+^{m} \times \{0, 1\}^p,Ax \geq b,Cx \geq d.
    \end{array}
    \right.
\end{equation*}
Here, $Ax \geq b$ represents the set of $s$ hard coupling constraints that make the problem computationally exhaustive. These constraints could be the linking constraints in Vehicle Routing Problem; see Appendix~\ref{apx:vhr-decomposition} for details. 

By dualizing the $s$ coupling constraints with a Lagrangian multipliers vector $\pi \in \bbR^s_+$, we obtain the Lagrangian dual function $u(\pi, P)$:
\begin{equation*}
u(\pi, P) \triangleq \left\{
    \begin{array}{cl}
        \min &c^\top x + \pi^\top (b - Ax) \\
        \textup{s.t.} &x \in \bbR^m_+ \times \{0, 1\}^p,~Cx \geq d.
     \end{array}
     \right.
\end{equation*}
It is a classical result that for any $\pi \in \bbR^s_+$, the value function $u(\pi, P)$ provides a valid lower bound on the optimal value objective, i.e., $u(\pi, P) \leq \textup{OPT}(P)$. Given a problem instance $P$, the best Lagrangian multipliers $\pi^*(P)$ corresponding to $P$ is found by solving the \textit{Lagrangian dual problem}:
\[
    \pi^*(P) \in \arg \max_{\pi \in \bbR^s_{+}} u(\pi, P).
\]
The data-driven settings assume that there is an application-specific problem distribution $\cD$ over the set of problem instances $\cP$. Optimally, we want to learn the best multipliers $\pi^*$ that maximize the expected Lagrangian value function over the problem distribution $\cD$
\[
    \pi^*(\cD) \in \arg\max_{\pi \in \bbR^s_+} \bbE_{P \sim \cD} [u(\pi, P)].
\]
Since the problem distribution $\cD$ is unknown, the problem above is intractable. Instead, we observe a set of $N$ problem instances $S = \{P_1, \dots, P_N\} \sim \cD^N$  and we consider the \textit{empirical risk maximization} (ERM) maximizer $\hat{\pi}(S)$, where
\[
    \hat{\pi}(S) \in \arg\max_{\pi \in \bbR_+^s}\frac{1}{N}\sum_{i = 1}^N u(\pi, P_i).
\]

\textbf{Structural assumptions.} To ensure the tractability of this data-driven learning framework, we make the following standard regularity assumptions about the problem geometry and the search space throughout this work.

\begin{assumption}[Bounded constraint violation] \label{asmp:bounded-constaint-violation}
     We assume there is a positive constant $B > 0$ such that with probability 1 for  problem instances $P = (c, A, b, C, d)$ drawn from the problem distribution $\cD$, for any feasible solution $x \in \cX = \{x \in \bbR^{m}_+ \times \{0, 1\}^p \mid Cx \geq d\}$, the constraints violations are bounded coordinate-wise:
     \[
        \abs{b_k} \leq B \quad \textup{and} \quad \abs{(Ax)_k} \leq B, \quad \textup{for all }k = 1, \dots, s. 
     \]
\end{assumption}

\begin{assumption}[Restricted search domain] \label{asmp:restricted-search-main}
    We assume the learning algorithm searches for Lagrangian multipliers within a bounded hypercube $\Pi \subset \bbR^s_{+}$, defined as: 
    \[
        \Pi = \{\pi \in \bbR^s \mid 0 \leq \pi_k \leq \pi_{\max} \quad \forall k = 1, \dots, s\}
    \]
    for some positive constant $\pi_{\max} > 0$.
\end{assumption}

Both assumptions are mild and commonly satisfied in practice. Assumption~\ref{asmp:bounded-constaint-violation} naturally holds for almost all real-world MILP applications, such as in logistics and energy systems, where decision variables and parameters are bounded by physical constraints (e.g., truck capacities or electricity generator limits). Assumption \ref{asmp:restricted-search-main} is a standard learning-theoretical requirement to ensure the compactness of the hypothesis class. Moreover, in practice, $\pi_\textup{max}$ can be chosen sufficiently large to include the optimal multipliers without loss of generality. Crucially, these assumptions naturally imply that the search space $\Pi$ has a bounded $\ell_2$-diameter $D = \pi_\textup{max}\sqrt{s}$, which is crucial for our further analyses. 

\textbf{Objectives.} We aim to establish learning-theoretic guarantees for the performance of the learned multipliers $\hat{\pi}$. Specifically, we analyze the expected excess risk of the ERM minimizer $\hat{\pi}$, which measures the expected gap between the optimal dual bound and the dual bound achieved by the learned multipliers:
\begin{align*}
    &\cE(\hat{\pi}) \triangleq \\
    &\bbE_{S \sim \cD^N}\left[\max_{\pi \in \Pi}\bbE_{P \sim \cD}[u(\pi, P)] - \bbE_{P \sim \cD}[u(\hat{\pi}(S), P)]\right].
\end{align*}
To analyze this quantity, we use the concepts of \textit{uniform convergence} introduced in Section~\ref{sec:preliminaries}. By a standard decomposition \citep{shalev2014understanding}, the expected risk can be upper-bounded by the \textit{expected uniform deviation} of the function class $\cU = \{P \mapsto u(\pi, P) \mid \pi \in \Pi\}$.
\begin{align*}
    & \mathcal{E}(\hat{\pi}) \le \\
    & 2 \mathbb{E}_{S \sim \mathcal{D}^N} \left[ \sup_{\pi \in \Pi} \left( \mathbb{E}_{P \sim \mathcal{D}}[u(\pi, P)] - \frac{1}{N}\sum_{i=1}^N u(\pi, P_i) \right) \right].
\end{align*}
Therefore, the main focus of this work is to analyze the expected uniform deviation.

\section{Generalization Guarantee for Data-driven Learning Lagrangian Relaxation} \label{sec:standard-lagrangian-relaxation}
In this section, we provide a generalization guarantee for data-driven learning for Lagrangian relaxation. 

\subsection{Geometric Properties} \label{sec:geometric-property}
We first establish the geometric properties of the dual function $u(\pi, P)$, specifically its concavity and the boundedness of its subgradients. These properties are critical for bounding the complexity of the function class in subsequent sections.
\begin{proposition}[Concavity and subgradient] \label{prop:geometric-property}
    Given any problem instance $P = (c, A, b, C, d) \in \cP$, we have:
    \begin{enumerate}[label=(\roman*)]
        \item $u(\cdot, P)$ is a concave function of $\pi$;
        \item $g(\pi, P) = b - Ax^*(\pi, P)$ is a subgradient of $u(\pi, P)$ in the $\pi$ variable. Moreover, under Assumption~\ref{asmp:bounded-constaint-violation}, we have $\|g(\pi, P)\|_2 \leq 2B\sqrt{s}$.
    \end{enumerate}
\end{proposition}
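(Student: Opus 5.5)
The plan is to use the standard ``minimum of affine functions'' argument. Fix $P$ and write $\cX = \{x \in \bbR^m_+ \times \{0,1\}^p \mid Cx \geq d\}$. For each $x \in \cX$ define the map $\ell_x(\pi) = c^\top x + \pi^\top(b - Ax)$, which is affine in $\pi$. Then $u(\pi, P) = \inf_{x \in \cX} \ell_x(\pi)$, and we assume the infimum is attained at some minimizer $x^*(\pi, P)$. This is implicit in the statement, since $g$ is defined through $x^*$. It holds for example when $\cX$ is bounded, because then $\cX$ is a finite union of polytopes, one for each integer assignment, and a linear objective attains its minimum on each.

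For part (i), I would note that a pointwise infimum of affine (hence concave) functions is concave. Directly: for $\pi_1, \pi_2$ and $\lambda \in [0,1]$, each $x \in \cX$ satisfies
\[
\ell_x(\lambda\pi_1 + (1-\lambda)\pi_2) = \lambda \ell_x(\pi_1) + (1-\lambda)\ell_x(\pi_2) \geq \lambda u(\pi_1,P) + (1-\lambda)u(\pi_2,P).
\]
Taking the infimum over $x$ gives concavity.

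For part (ii), fix $\pi$ and let $x^* = x^*(\pi, P)$. For any $\pi'$ we have
\[
u(\pi', P) \leq \ell_{x^*}(\pi') = \ell_{x^*}(\pi) + (\pi' - \pi)^\top (b - Ax^*) = u(\pi, P) + g(\pi,P)^\top(\pi' - \pi).
\]
This is exactly the supergradient inequality for a concave function, so $g(\pi,P)$ is a (super)gradient in the paper's terminology.

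For the norm bound, Assumption~\ref{asmp:bounded-constaint-violation} applies because $x^* \in \cX$. It gives $|g_k| \leq |b_k| + |(Ax^*)_k| \leq 2B$ for every $k$. Summing the squares over the $s$ coordinates yields $\|g\|_2 \leq 2B\sqrt{s}$.

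The only delicate point is attainment of the minimum defining $x^*(\pi,P)$. If it fails, I would fall back on an $\epsilon$-minimizer, or use Assumption~\ref{asmp:bounded-constaint-violation}: it bounds $\ell_x(\pi) - c^\top x$ uniformly over $\cX$, so finiteness of $u$ reduces to the original objective being bounded below. The rest is routine.
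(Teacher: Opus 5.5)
Your proof is correct and follows the paper's argument essentially verbatim: concavity as a pointwise minimum of affine functions, the supergradient inequality obtained by plugging $x^*(\pi,P)$ into the definition of $u(\pi',P)$, and the norm bound from Assumption~\ref{asmp:bounded-constaint-violation}. The paper bounds the norm via $\|b\|_2 + \|Ax^*\|_2 \leq 2B\sqrt{s}$ rather than your coordinatewise $2B$ bound, which amounts to the same thing, and it takes attainment of the minimizer for granted rather than discussing it as you do.
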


The proof of Proposition~\ref{prop:geometric-property} follows the conventional arguments in convex analysis, and the full proof is relegated to Appendix~\ref{sec:app-geometric}.

\begin{corollary}[Lipschitzness] \label{cor:bounded-diam}
    Given a problem instance $P$, we have $u(\cdot, P)$ is an $L$-Lipschitz function of $\pi$, where $L = 2B\sqrt{s}$. That is, for any $P \in \cP$ and any $\pi, \pi' \in \Pi$, we have
    \[
        |u(\pi, P) - u(\pi', P)| \le 2B\sqrt{s} \|\pi - \pi'\|_2.
    \]
\end{corollary}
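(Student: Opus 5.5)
The plan is to prove Lipschitzness directly from the supergradient inequality of Proposition~\ref{prop:geometric-property}. Fix $P$ and $\pi, \pi' \in \Pi$. The Lagrangian subproblem has a minimizer $x^*(\pi, P)$, and we write $g(\pi, P) = b - Ax^*(\pi,P)$. Since $u(\cdot,P)$ is a pointwise minimum of affine functions, evaluating the objective at $\pi'$ with the feasible point $x^*(\pi,P)$ gives
\[
u(\pi', P) \le c^\top x^*(\pi,P) + \pi'^\top (b - Ax^*(\pi,P)) = u(\pi,P) + g(\pi,P)^\top(\pi' - \pi).
\]
This is exactly the supergradient property (ii). Swapping the roles of $\pi$ and $\pi'$ gives
\[
u(\pi,P) \le u(\pi',P) + g(\pi',P)^\top(\pi - \pi').
\]

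Combining the two inequalities yields
\[
-g(\pi',P)^\top(\pi-\pi') \le u(\pi,P) - u(\pi',P) \le g(\pi,P)^\top(\pi - \pi').
\]
Applying Cauchy--Schwarz to each side then gives
\[
|u(\pi,P)-u(\pi',P)| \le \max\{\|g(\pi,P)\|_2, \|g(\pi',P)\|_2\}\,\|\pi-\pi'\|_2.
\]

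It remains to bound the supergradients. Assumption~\ref{asmp:bounded-constaint-violation} gives $|b_k - (Ax)_k| \le 2B$ coordinatewise for every feasible $x$. Hence $\|g\|_2 \le 2B\sqrt{s}$, which is the bound stated in Proposition~\ref{prop:geometric-property}(ii), and the claim follows with $L = 2B\sqrt{s}$.

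The only delicate point is whether the minimizer $x^*(\pi,P)$ exists. If the minimum is not attained, I would take an $\epsilon$-minimizer $x_\epsilon$ instead. The same argument then gives the inequalities up to an additive $\epsilon$, and letting $\epsilon \to 0$ recovers the bound. The coordinatewise bound holds for all feasible $x$, so it still applies to $x_\epsilon$. I would also note that the statement is quantified over all $P\in\cP$, whereas the assumption holds almost surely under $\cD$. I would therefore state the result for instances satisfying Assumption~\ref{asmp:bounded-constaint-violation}, which is all that is used later.
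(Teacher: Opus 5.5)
Your argument is the paper's own: the paper proves the corollary in one line by noting that the supergradients from Proposition~\ref{prop:geometric-property} are uniformly bounded by $2B\sqrt{s}$. You write out the two-sided supergradient and Cauchy--Schwarz step that this one-liner leaves implicit, and your remarks on $\epsilon$-minimizers and on the almost-sure quantifier are sensible additions.

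One bookkeeping slip: your sandwich swaps the two supergradients and flips a sign, and as displayed it is false in general. Rearranged correctly, your two inequalities give $g(\pi,P)^\top(\pi-\pi') \le u(\pi,P)-u(\pi',P) \le g(\pi',P)^\top(\pi-\pi')$. This corrected version still yields your bound $|u(\pi,P)-u(\pi',P)| \le \max\{\|g(\pi,P)\|_2,\|g(\pi',P)\|_2\}\,\|\pi-\pi'\|_2$, so the conclusion stands.
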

\proof
    This is a consequence of Proposition \ref{prop:geometric-property}: the norm of a subgradient is \textit{uniformly bounded} by $2B\sqrt{s}$.
\qed

Since $u(\cdot, P)$ is a concave function in $\pi$ for a given problem instance $P$, the vector $g(\pi, P)$ is technically the \textit{super-gradient}, i.e., satisfying $u(\pi', P) \leq u(\pi, P) + g(\pi, P)^\top (\pi' - \pi)$ for any $\pi,\pi' \in \Pi$. However, to be consistent with standard literature on Lagrangian relaxation, which often generalizes the terminology of convex optimization, we refer to $g(\pi, P)$ as a \textit{subgradient} of $u(\pi, P)$ throughout this work. 

\subsection{Rademacher Complexity Upper Bound} \label{sec:upper-bound-data-driven-learning-lagrangian}
We now bound the Rademacher complexity of the function class $\mathcal{U} = \{ P \mapsto u(\pi, P) : \pi \in \Pi \}$. To this end, we use $\cN(\delta, \cU, \|\cdot\|_{2, N})$ to denote the $\delta$-covering number of $\cU$ with respect to the empirical $L_2$-norm (cf.~Definition \ref{def:empirical-l2-pn-metric}). 

\begin{lemma} [Covering number] \label{lm:covering-upper-bound}
     For any $\delta > 0$, we have
    \[
        \log \cN(\delta, \cU, \|\cdot\|_{2, N}) \leq s \log \left(1 + \frac{2B\pi_\textup{max}s}{\delta}\right).
    \]
\end{lemma}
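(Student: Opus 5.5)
The plan is to transfer a covering of the parameter cube $\Pi$ to a covering of $\cU$ through the Lipschitz bound of Corollary~\ref{cor:bounded-diam}. Because Corollary~\ref{cor:bounded-diam} holds uniformly over instances $P$, for any $\pi,\pi'\in\Pi$ and any sample $S=\{P_1,\dots,P_N\}$ we get
\[
\|u_\pi-u_{\pi'}\|_{2,N}=\Big(\frac1N\sum_{i=1}^N\big(u(\pi,P_i)-u(\pi',P_i)\big)^2\Big)^{1/2}\le 2B\sqrt{s}\,\|\pi-\pi'\|_2 .
\]
So if $\{\pi^{(1)},\dots,\pi^{(K)}\}$ is an $\epsilon$-cover of $\Pi$ in $\ell_2$ with $\epsilon=\delta/(2B\sqrt s)$, then $\{u_{\pi^{(j)}}\}$ is a $\delta$-cover of $\cU$ in $\|\cdot\|_{2,N}$. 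This gives $\cN(\delta,\cU,\|\cdot\|_{2,N})\le \cN(\epsilon,\Pi,\|\cdot\|_2)$, and the bound is independent of $S$.

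It remains to bound the covering number of $\Pi$. Since $\Pi$ is a cube of side $\pi_{\max}$, it is contained in the $\ell_2$ ball of radius $R=\pi_{\max}\sqrt s$, which is its diameter $D$, around any of its points. The standard volumetric argument gives
\[
\cN(\epsilon,\mathbb B_2(R),\|\cdot\|_2)\le(1+2R/\epsilon)^s .
\]
The argument runs as follows. Take a maximal $\epsilon$-separated set. It is an $\epsilon$-cover. The balls of radius $\epsilon/2$ around its points are disjoint and sit inside $\mathbb B_2(R+\epsilon/2)$, so comparing volumes yields the bound. One point needs care: covering centers may lie outside $\Pi$. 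I handle this by using the maximal separated subset of $\Pi$ itself. Its points lie in $\Pi$, and the volume comparison still works inside the ball of radius $R+\epsilon/2$ centered at a point of $\Pi$. This requires $R\ge$ the diameter of $\Pi$, which holds for $R=D$.

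Substituting $\epsilon=\delta/(2B\sqrt s)$ gives $2R/\epsilon=4B\pi_{\max}s/\delta$, which is off by a factor of $2$ from the claim. To recover the stated constant $2B\pi_{\max}s/\delta$, I would center the ball at the cube's center, whose radius is $R=\pi_{\max}\sqrt s/2$. I would then either accept centers outside $\Pi$ (external covering), or note that $\cN$ in the lemma only requires the covering functions to be $\delta$-close, so centers in $\bbR^s_+$ outside $\Pi$ are acceptable. Corollary~\ref{cor:bounded-diam}'s Lipschitz bound holds for all $\pi,\pi'\ge 0$, since the subgradient bound of Proposition~\ref{prop:geometric-property} does not depend on $\Pi$. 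If needed, I would project the centers onto the convex set $\Pi$, which does not increase distances to points in $\Pi$. This makes the external cover internal without changing the count.

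Taking logarithms then gives $\log\cN(\delta,\cU,\|\cdot\|_{2,N})\le s\log(1+2B\pi_{\max}s/\delta)$. The only delicate step is this constant bookkeeping, namely the choice of radius and whether centers lie in $\Pi$. The rest is routine.
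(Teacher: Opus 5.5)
Your proof is correct and follows the paper's route: you transfer an $\ell_2$ cover of $\Pi$ at scale $\delta/(2B\sqrt{s})$ to a $\|\cdot\|_{2,N}$ cover of $\cU$ via the Lipschitz bound of Corollary~\ref{cor:bounded-diam}, then bound the cube's covering number volumetrically with radius $\mathrm{diam}(\Pi)/2$, which gives the paper's $(1+\mathrm{diam}(\Pi)L/\delta)^s$. The external-cover and projection detour is unnecessary: the ball in the volume comparison need not be centered at a cover point, and the cube's center lies in $\Pi$ anyway, so your maximal separated subset of $\Pi$ already sits inside the ball of radius $\pi_{\max}\sqrt{s}/2+\epsilon/2$ around that center and yields the stated constant with internal centers.
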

For completeness, we present in Appendix~\ref{apx:additional-backgrounds} the related backgrounds on the $\delta$-covering properties. Next, we present the proof of Lemma~\ref{lm:covering-upper-bound}.
\proof[Proof of Lemma~\ref{lm:covering-upper-bound}]
     Given a problem instance $P$, the function $u(\cdot, P)$ is a $L$-Lipschitz function of $\pi$, where $L = 2B\sqrt{s}$ by Proposition~\ref{prop:geometric-property}. Therefore, an $(\delta/L)$-covering of the parameter space $\Pi$ in the $\ell_2$ metric induces an $\delta$-covering of the function class $\cU$ in the empirical-$L_2$ metric. Then,  a volume-metric apparent argument from~\citet[Lemma~5.7]{wainwright2019high} shows that: 
     \begin{align*}
        \log \cN(\delta, \cU, \|\cdot\|_{2, N})  &\le \log \cN(\delta/L, \Pi, \|\cdot\|_2) \\
        & \leq s \log \left(1 + \frac{\textup{diam}(\Pi) L}{\delta}\right).
     \end{align*}
     Substituting $\textup{diam}(\Pi) = \pi_\textup{max}\sqrt{s}$ from Assumption \ref{asmp:restricted-search-main} and $L = 2B\sqrt{s}$ from Corollary \ref{cor:bounded-diam}, we have the final claim.
\qed

From Lemma \ref{lm:covering-upper-bound}, we can combine with Dudley's entropic integral (Lemma \ref{lm:dudley}) to establish an upper-bound for the Rademacher complexity of $\cU$.

\begin{lemma}[Rademacher complexity upper-bound] \label{lm:empirical-rademacher-complexity-upper-bound}
    The Rademacher complexity of $\cU$ is bounded by:
    $\mathscr{R}_N(\mathcal U) = \cO\left(\frac{s^{1.5}}{\sqrt{N}}\right).$
\end{lemma}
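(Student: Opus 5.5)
We bound the empirical Rademacher complexity $\hat{\mathscr{R}}_S(\cU)$ uniformly over samples $S$ and then take the expectation over $S \sim \cD^N$. The tool is Dudley's entropy integral (Lemma~\ref{lm:dudley}), which in its standard form reads
\[
\hat{\mathscr{R}}_S(\cU) \le \frac{12}{\sqrt{N}} \int_0^{\textup{diam}(\cU)} \sqrt{\log \cN(\delta, \cU, \|\cdot\|_{2,N})}\, d\delta .
\]
Here $\textup{diam}(\cU)$ is the diameter of $\cU$ in the empirical $L_2$ metric.

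\textbf{Step 1: the upper limit of integration.} Corollary~\ref{cor:bounded-diam} gives $|u(\pi,P)-u(\pi',P)| \le L\|\pi-\pi'\|_2 \le L\,\pi_{\max}\sqrt{s}$. Hence any two functions in $\cU$ are within $R \triangleq 2B\pi_{\max} s$ of each other in $\|\cdot\|_{2,N}$. Only differences of functions matter here, so the additive constant $H$ plays no role. We may therefore integrate only up to $R$.

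\textbf{Step 2: insert the covering bound and evaluate.} Lemma~\ref{lm:covering-upper-bound} gives $\log \cN(\delta,\cU,\|\cdot\|_{2,N}) \le s\log(1+R/\delta)$, which yields
\[
\hat{\mathscr{R}}_S(\cU) \le \frac{12\sqrt{s}}{\sqrt{N}} \int_0^{R} \sqrt{\log(1+R/\delta)}\, d\delta = \frac{12\sqrt{s}\,R}{\sqrt{N}} \int_0^1 \sqrt{\log(1+1/t)}\, dt ,
\]
using the substitution $\delta = Rt$. The remaining integral is an absolute constant: bound $\sqrt{\log(1+1/t)} \le \sqrt{\log 2} + \sqrt{\log(1/t)}$, and note that $\int_0^1 \sqrt{\log(1/t)}\,dt = \Gamma(3/2) = \sqrt{\pi}/2$. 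The result is $\hat{\mathscr{R}}_S(\cU) \le C\, B\pi_{\max}\, s^{1.5}/\sqrt{N}$ for a universal constant $C$. This bound holds for every $S$, so taking the expectation over $S \sim \cD^N$ gives $\mathscr{R}_N(\cU) = \cO(s^{1.5}/\sqrt{N})$.

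\textbf{Main obstacle.} The only delicate point is the bookkeeping of where the factors of $s$ come from. One factor $\sqrt{s}$ comes from the entropy, one $\sqrt{s}$ from the Lipschitz constant, and one $\sqrt{s}$ from the $\ell_2$ diameter of the hypercube $\Pi$. The integral must also converge at $\delta \to 0$, which holds because the logarithmic singularity of $\sqrt{\log(1/t)}$ is integrable.

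If Lemma~\ref{lm:dudley} is stated with a lower truncation level $\alpha$, i.e.\ as $4\alpha + \frac{12}{\sqrt N}\int_\alpha^R$, we simply let $\alpha \to 0$, or choose $\alpha = R/\sqrt N$. Either choice leaves the same $\cO(s^{1.5}/\sqrt N)$ bound.
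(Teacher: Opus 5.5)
Your proposal is correct and follows essentially the same route as the paper. Both arguments use Dudley's entropy integral with upper limit $L\cdot \mathrm{diam}(\Pi) = 2B\pi_{\max}s$, insert the covering bound $s\log(1+R/\delta)$ from Lemma~\ref{lm:covering-upper-bound}, and evaluate $\int_0^1\sqrt{\log(1/t)}\,dt=\sqrt{\pi}/2$, with one factor $\sqrt{s}$ each from the entropy, the Lipschitz constant, and the diameter. Your justification of the upper limit via shift-invariance of the Rademacher complexity and your bound $\log(1+1/t)\le \log 2+\log(1/t)$ are slightly more careful than the paper's step using $\log(3LD/\delta)$ with an unnormalized Dudley bound.
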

\proof[Proof of Lemma~\ref{lm:empirical-rademacher-complexity-upper-bound}]
    Let $D = \mathrm{diam}(\Pi)$ be the diameter of~$\Pi$. We use Dudley's entropic integral (cf.~Lemma \ref{lm:dudley}) and Lemma~\ref{lm:covering-upper-bound} to bound the empirical Rademacher complexity $\hat{\mathscr{R}}_S(\cU)$. Given a set $S$ of $N$ problem instances, we have
    \begin{equation*}
        \begin{aligned}
            \hat{\mathscr{R}}_S(\cU) &\leq \inf_{\delta_0 > 0} \delta_0 + \int_{\delta_0}^{L \cdot D}\sqrt{\frac{\log \cN(\delta, \cU, \|\cdot\|_{2, N})}{N}} \mathrm{d}\delta \\
            &\leq \inf_{\delta_0 > 0} \delta_0 + \sqrt{\frac{s}{N}} \int_{\delta_0}^{L \cdot D}\sqrt{\log(3LD/\delta)} \mathrm{d}\delta.
        \end{aligned}
    \end{equation*}
    Using the classic identity $\int_{0}^R \sqrt{\log (R/\delta)} \mathrm{d}\delta = R\cdot \frac{\sqrt{\pi}}{2}$,\footnote{We abuse notation here to let $\pi$ denote the mathematical constant $\pi \approx 3.14$. In all other contexts, $\pi$ refers to the Lagrangian multipliers.} we have
    \[
        \hat{\mathscr{R}}_S(\cU) \lesssim \sqrt{\frac{s}{N}} \cdot \frac{3LD\sqrt{\pi}}{2} .
    \]
    And finally, recall from Lemma \ref{lm:covering-upper-bound}, we have $L = 2B\sqrt{s}$ and $D = \pi_\textup{max}\sqrt{s}$, we conclude that
    \[
        \hat{\mathscr{R}}_S(\cU) = \cO\left(\frac{s^{1.5}}{\sqrt{N }}\right),
    \]
    where we treat $B$ and  $\pi_\textup{max}$ as constants. Taking expectation over $S \sim \cD^N$, we have the final conclusion. 
\qed

We then have the following result, which is a consequence of Lemma \ref{lm:empirical-rademacher-complexity-upper-bound} and Theorem \ref{thm:generalization-guarantee-via-rademacher}.

\begin{theorem}[Risk bound]\label{thm:upper-bound-learning-lagrangian}
    Let $S$ be a set of $N$ problem instance $P_1, \dots, P_N$ drawn i.i.d.~from a problem distribution~$\cD$. Then the expected excess risk of ERM minimizer $\hat{\pi}(S)$ is bounded by
    \[
        \cE(\hat{\pi}) = \cO\left(\frac{s^{1.5}}{\sqrt{N}}\right).
    \]
\end{theorem}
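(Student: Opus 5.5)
The plan is to run the standard ERM decomposition stated in the Objectives paragraph and then plug in Lemma~\ref{lm:empirical-rademacher-complexity-upper-bound} through Theorem~\ref{thm:generalization-guarantee-via-rademacher}. Write $F(\pi) = \bbE_{P\sim\cD}[u(\pi,P)]$ and $\hat F_S(\pi) = \frac1N\sum_{i} u(\pi,P_i)$. Since $\Pi$ is compact and $F$ is continuous (it is $L$-Lipschitz by Corollary~\ref{cor:bounded-diam} and dominated convergence), a maximizer $\pi^\star \in \arg\max_{\pi\in\Pi} F(\pi)$ exists. I interpret $\hat\pi(S)$ as the ERM maximizer over $\Pi$, which Assumption~\ref{asmp:restricted-search-main} requires.

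Next I split the excess risk for a fixed $S$:
\[
F(\pi^\star) - F(\hat\pi) = \bigl[F(\pi^\star) - \hat F_S(\pi^\star)\bigr] + \bigl[\hat F_S(\pi^\star) - \hat F_S(\hat\pi)\bigr] + \bigl[\hat F_S(\hat\pi) - F(\hat\pi)\bigr].
\]
The middle term is $\le 0$ by the ERM property.

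The first term has expectation exactly $0$, because $\pi^\star$ does not depend on $S$. The third term is at most $\sup_{\pi\in\Pi}\bigl(\hat F_S(\pi) - F(\pi)\bigr)$.

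Applying Theorem~\ref{thm:generalization-guarantee-via-rademacher} to the class $-\cU$, which has the same Rademacher complexity because the $\sigma_i$ are symmetric, the expectation of the third term is at most $2\mathscr R_N(\cU)$. This gives $\cE(\hat\pi) \le 2\mathscr R_N(\cU)$. Lemma~\ref{lm:empirical-rademacher-complexity-upper-bound} then bounds this by $\cO(s^{1.5}/\sqrt N)$, with $B$ and $\pi_{\max}$ treated as constants.

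The main point needing care is measurability and integrability of the supremum and of $u(\hat\pi(S),P)$. I would handle it as follows:
\begin{itemize}
\item Boundedness: by Assumption~\ref{asmp:bounded-constaint-violation}, $|\pi^\top(b-Ax)| \le 2B\pi_{\max}s$ on $\Pi$, so $u$ is bounded whenever $u(0,P)$ is. I will assume the range bound $H$ from the definition of $\cU$.
\item Measurability: the $L$-Lipschitz property lets me take the supremum over a countable dense subset of $\Pi$.
\item Selection of $\hat\pi$: I take a measurable selection of the ERM maximizer. Alternatively, I use an $\epsilon$-approximate maximizer and let $\epsilon\to 0$.
\end{itemize}
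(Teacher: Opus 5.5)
Your argument is correct and matches the paper's proof: the ERM excess-risk decomposition, then Theorem~\ref{thm:generalization-guarantee-via-rademacher} to pass to Rademacher complexity, then Lemma~\ref{lm:empirical-rademacher-complexity-upper-bound} for the $\mathcal{O}(s^{1.5}/\sqrt{N})$ rate. Your version is a bit tidier than the paper's appeal to the Objectives paragraph, which bounds $\mathcal{E}(\hat\pi)$ by twice the expected deviation $\sup_\pi(F-\hat F_S)$ and so implicitly gives $4\mathscr{R}_N(\mathcal{U})$. You instead kill the first term in expectation and apply the uniform-convergence bound to $-\mathcal{U}$, which handles the correctly signed deviation $\sup_\pi(\hat F_S-F)$, gives $2\mathscr{R}_N(\mathcal{U})$, and makes explicit that $\hat\pi$ must be the ERM over $\Pi$.
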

\proof[Proof of Theorem~\ref{thm:upper-bound-learning-lagrangian}]
    Combining the definition of expected excess risk in~Section \ref{sec:settings}, the generalization bound via Rademacher complexity in Theorem \ref{thm:generalization-guarantee-via-rademacher}, and Lemma \ref{lm:empirical-rademacher-complexity-upper-bound} above, we have the final conclusion.
\qed

\subsection{Lower Bound} \label{sec:lower-bound}

Section~\ref{sec:upper-bound-data-driven-learning-lagrangian} asserted that the expected excess risk of the ERM estimator decays at a rate of $\mathcal{O}(s^{1.5}/\sqrt{N})$. This naturally raises the question: can a more sophisticated learning algorithm achieve a better rate, perhaps one that is independent of the number of coupling constraints $s$? In this section, we give a \emph{negative} answer to this question. We will show in Theorem~\ref{thm:minimax-lower-bound} that there is no algorithm, regardless of its design, that can achieve an expected excess risk lower than $\Omega(s/\sqrt{N})$ in the worst case. This implies that the linear dependence on the number of constraints $s$ is intrinsic to the geometry of Lagrangian relaxation and cannot be overcome by better algorithm design.

\begin{theorem}[Minimax lower-bound] \label{thm:minimax-lower-bound}
    Let $s \geq 16$ be the number of coupling constraints and $N$ be the sample size. For any learning algorithm $\pi$ that maps a dataset $S \sim \cD^N$ to a Lagrangian multiplier $\pi(S)$, the minimax expected excess risk is lower-bounded by:
    \[
        \inf_{\pi} \sup_{\cD \in \Delta(\cP)} \cE(\pi) = \Omega\left(\frac{s}{\sqrt{N}}\right).
    \]
\end{theorem}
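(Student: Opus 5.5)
We construct a family of hard distributions indexed by the hypercube, reduce excess risk to a Hamming-type parameter error, and apply Fano's method (Theorem~\ref{thm:fano}) via the KL bound \eqref{eq:information-kl-upper-bound}.

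\textbf{Construction.} Take a fully separable instance with no complicating structure: $m=0$, $p=s$, $A=I_s$, $b=\tfrac12\mathbf 1$, and $C,d$ vacuous, so $x\in\{0,1\}^s$. Then
\[
u(\pi,P)=\tfrac12\mathbf 1^\top\pi+\sum_{k=1}^s\min\{0,\,c_k-\pi_k\}.
\]
Assumption~\ref{asmp:bounded-constaint-violation} holds with $B=1$. For a parameter $\varepsilon\in(0,1/4]$ and $v\in\{0,1\}^s$, let $\cD_v$ draw each $c_k$ independently from a two-point distribution on $\{a,a'\}\subset(0,\pi_{\max})$, say $\{\pi_{\max}/4,\,3\pi_{\max}/4\}$. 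The probability of the high value is $\tfrac12+\varepsilon(2v_k-1)$.

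The expected per-coordinate objective is $\tfrac12\pi_k-\bbE(\pi_k-c_k)_+$. This is concave and piecewise linear in $\pi_k$, with slope $\tfrac12-\Pr(c_k\le\pi_k)$. Hence the maximizer is the median of $c_k$: it sits at the high point when $v_k=1$ and at the low point when $v_k=0$. Moreover, choosing $\pi_k$ on the wrong side costs excess risk $\gtrsim\varepsilon\cdot|a'-a|=\Omega(\varepsilon)$ per coordinate. This holds since the slope on the interval $(a,a')$ has magnitude exactly $\varepsilon$.

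\textbf{Reduction.} The excess risk decomposes additively over coordinates. Define $\hat v_k=\mathbf 1\{\pi_k(S)\ge (a+a')/2\}$. Then
\[
\cE(\pi)\ \ge\ c\,\varepsilon\,\bbE\,d_H(\hat v,v),
\]
for a constant $c$ depending on $\pi_{\max}$, since each misclassified coordinate loses at least $\varepsilon(a'-a)/2$.

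By the Gilbert--Varshamov bound (this is where $s\ge16$ enters), there is a packing $\mathcal V\subset\{0,1\}^s$ with pairwise Hamming distance at least $s/8$ and $\log M\ge s/8$. Applying Theorem~\ref{thm:fano} with $\rho=d_H$ and $\delta=s/16$ gives
\[
\inf\sup\bbE\,d_H\ \ge\ \frac{s}{16}\Bigl(1-\frac{I+\log 2}{s/8}\Bigr).
\]

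\textbf{KL control.} Each $\cD_v$ is a product of Bernoullis. The per-coordinate KL divergence between $\mathrm{Ber}(\tfrac12+\varepsilon)$ and $\mathrm{Ber}(\tfrac12-\varepsilon)$ is at most $16\varepsilon^2$ for $\varepsilon\le1/4$. Summing over coordinates gives $\textup{KL}(\cD_v\|\cD_{v'})\le16s\varepsilon^2$, so by \eqref{eq:information-kl-upper-bound},
\[
I(J;S)\le16Ns\varepsilon^2.
\]
Choose $\varepsilon=c'/\sqrt N$ with $c'$ small enough that $16Ns\varepsilon^2\le s/32$. Using $\log 2\le s/32$ for $s\ge16$, the Fano bracket is at least $1/2$. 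This yields $\bbE\,d_H\ge s/32$, hence
\[
\cE\ \gtrsim\ \varepsilon s=\Omega(s/\sqrt N).
\]
If $N$ is small enough that $c'/\sqrt N>1/4$, cap $\varepsilon$ at $1/4$; the bound then holds trivially up to constants.

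\textbf{Main obstacle.} The delicate step is the reduction from excess risk to Hamming error. It requires verifying that the gap is \emph{linear} in $\varepsilon$ per coordinate, rather than quadratic as in smooth problems. This linearity is exactly what makes the rate $s/\sqrt N$ instead of $s/N$. It follows from the piecewise-linear, kinked structure of $u$: its slope between the two atoms equals $\pm\varepsilon$. I would verify this carefully, together with the requirement that every estimate $\pi(S)\in\Pi$ is rounded consistently.
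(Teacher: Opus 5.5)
Your proposal is correct and is essentially the paper's proof. It uses the same identity-coupling instances with $b=\tfrac12\mathbf 1$, the same biased two-point law on each $c_k$ (the optimal multiplier is the upper or lower atom according to $v_k$, with slope $\pm\varepsilon$ in between), Varshamov--Gilbert packing, the product-Bernoulli KL bound, and Fano with $\varepsilon\asymp 1/\sqrt N$. The only difference is cosmetic: you round $\pi(S)$ to a vertex and run Fano in Hamming distance. The paper instead runs Fano in $\ell_1$ directly on the multipliers, using the sharpness inequality $\mathbb{E}_{P\sim\mathcal{D}_v}[u(\pi^*(\mathcal{D}_v),P)-u(\pi,P)]\ge\frac{\epsilon}{2}\|\pi^*(\mathcal{D}_v)-\pi\|_1$ (its $\epsilon/2$ is your $\varepsilon$).

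There is one constant-level slip to repair. In natural logarithms $\log M\ge\frac{s}{8}\log 2$ rather than $s/8$, and $\log 2\le s/32$ fails for $16\le s\le 22$. Use $\frac{\log 2}{\log M}\le\frac{8}{s}\le\frac12$ instead; this is where $s\ge 16$ actually enters, not in Gilbert--Varshamov. Then shrink $c'$ so that $I(J;S)\le\frac14\log M$, which leaves a Fano factor of at least $\frac14$.
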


\textbf{Proof overview.} Our proof for Theorem \ref{thm:minimax-lower-bound} relies on a reduction from estimation to testing using Fano's method in Theorem \ref{thm:fano}. The core idea is to construct a discrete family of \textit{hard} problem distributions $\{\cD_v\}_{v \in \mathcal V}$, where $\mathcal V$ is a set of binary vectors with $\Omega(2^{s/8})$ elements, that are statistically distinguishable only with a large number of samples, yet whose optimal multipliers $\{\pi^*(\cD_v)\}_{v \in \mathcal V}$ are geometrically distinct. The full proof  can be found in Appendix~\ref{sec:app-minimax}, we here sketch three key steps:

\textbf{Step 1: Construction of hard instances.} We first construct a family of distributions parameterized by a binary vector $v \in \{0, 1\}^s$, effectively reducing the learning problem to a high-dimensional parameter estimation problem. The following result establishes the general idea of that construction and its geometric property.

\begin{lemma}[Geometric separation] \label{lm:lower-bound-step-1}
    There exists a set of $M \geq 2^{s/8}$ distributions $\{\cD_{v^{(1)}}, \dots, \cD_{v^{(M)}}\}_{v^{(i)} \in \mathcal V}$ parameterized by an $s$-dimensional binary vectors $v^{(i)} \in \mathcal V \subset \{0, 1\}^s$, such that for any distinct pair $v^{(i)} \neq v^{(j)}$, the $\ell_1$ distance between their optimal multipliers satisfies $\|\pi^*(\cD_{v^{(i)}}) - \pi^*(\cD_{v^{(j)}})\|_1 \geq \Omega(\sigma s)$, where $\sigma$ is some perturbation scale parameter.
\end{lemma}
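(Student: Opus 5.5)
The plan is to build, for each $v \in \{0,1\}^s$, a decoupled MILP family in which the expected dual $\bbE_{P\sim\cD_v}[u(\pi,P)]$ separates into $s$ one-dimensional concave problems. Each coordinate's maximizer then moves by an amount of order $\sigma$ according to $v_k$. We then extract a large subset of $\{0,1\}^s$ with pairwise Hamming distance $\Omega(s)$ using the Gilbert--Varshamov bound. Since the coordinates decouple, a Hamming distance of at least $s/8$ between $v^{(i)}$ and $v^{(j)}$ gives $\|\pi^*(\cD_{v^{(i)}})-\pi^*(\cD_{v^{(j)}})\|_1 \ge (s/8)\cdot \Omega(\sigma) = \Omega(\sigma s)$.

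\textbf{Construction.} Take $x \in \{0,1\}^s$ with no side constraints $Cx\ge d$. Let $A = I$ and $b = \tfrac12\mathbf 1$, so the $s$ coupling constraints are $x_k \ge 1/2$. The Lagrangian is then
\[
u(\pi,P) = \tfrac12\sum_k \pi_k + \sum_k \min\{0,\ c_k - \pi_k\},
\]
which is separable. Assumption~\ref{asmp:bounded-constaint-violation} holds with $B=1$, and we take $\pi_{\max} \ge 1$. Let the objective coefficient be random: $c_k$ takes the value $a$ or $a'$ (for instance $1/4$ and $3/4$) with probability $\tfrac12 + \sigma(2v_k-1)$ for the first value, with coordinates independent. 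Each coordinate's expected dual $f_k(\pi_k)=\tfrac12\pi_k + \bbE[\min\{0,c_k-\pi_k\}]$ is piecewise linear. Its slope is $1/2 - \Pr(c_k<\pi_k)$, so the maximizer sits at the median of $c_k$, which jumps between the two atoms. That displacement is $\Theta(1)$, not $\Theta(\sigma)$, which is too large.

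To obtain a displacement of order $\sigma$, smooth the distribution instead: let $c_k$ be uniform on an interval of length one, shifted by $\sigma v_k$. Then the median, and hence $\pi^*_k(\cD_v)$, equals a base value plus $\sigma v_k$. Concavity and uniqueness of the maximizer of each $f_k$ follow from strict monotonicity of the CDF.

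The uniform distribution makes the KL divergence infinite, which would break the Fano step later. A better choice is a two-point law whose probabilities are perturbed by $\sigma$: $c_k\in\{0,1\}$ with $\Pr(c_k=0)=\tfrac12+\sigma(2v_k-1)$, combined with a $b_k$ chosen so that the expected dual is strictly concave near the optimum. If needed, $b_k$ can also be made random, or a quadratic-like behaviour can be obtained by averaging over several thresholds. Under this choice the maximizer shifts by $\Theta(\sigma)$ while the KL divergence is $O(\sigma^2)$ per coordinate.

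\textbf{Main obstacle.} The hardest part is getting both properties from one construction: a maximizer that moves linearly in $\sigma$ (which requires bounded curvature of $f_k$), together with KL divergence of order $\sigma^2$ (which requires absolutely continuous, comparable laws). My first attempt would be $c_k$ drawn from a fixed smooth density such as a truncated Gaussian or triangular law on $[0,1]$, tilted by $\sigma v_k$ through a multiplicative factor $1+\sigma(2v_k-1)h(c)$, with $h$ a bounded odd function around the median. With this tilt the median moves by $\Theta(\sigma)$ and the KL divergence is $O(\sigma^2)$. I would compute the median shift explicitly to verify it.

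\textbf{Packing.} Finally, the Gilbert--Varshamov bound gives $\mathcal V\subset\{0,1\}^s$ with $|\mathcal V|\ge 2^{s/8}$ and pairwise Hamming distance at least $s/8$ whenever $s\ge 16$. Summing the per-coordinate separations over the coordinates where $v^{(i)}$ and $v^{(j)}$ differ completes the lemma.
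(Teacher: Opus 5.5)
The gap is that you reject your first construction, which is exactly the paper's and already proves the lemma, for a reason that does not apply, and you never finish a replacement. The confusion comes from using $\sigma$ for the probability bias. In the lemma (and the paper), $\sigma$ is the \emph{spacing of the two atoms}, and the bias is a separate parameter $\epsilon$: $c_k\in\{\mu,\mu+\sigma\}$ with $\bbP(c_k=\mu+\sigma)=\frac{1+\epsilon}{2}$ if $v_k=1$ and $\frac{1-\epsilon}{2}$ if $v_k=0$.

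As you computed, $\mathcal J_k$ is piecewise linear. Its slope is $\tfrac12$ on $(0,\mu)$, $(2v_k-1)\epsilon/2$ on $(\mu,\mu+\sigma)$, and $-\tfrac12$ beyond $\mu+\sigma$. Hence its maximizer is the median, $\pi^*(\cD_v)=\mu\mathbf 1_s+\sigma v$. Varshamov--Gilbert then gives $\|\pi^*(\cD_{v^{(i)}})-\pi^*(\cD_{v^{(j)}})\|_1=\sigma\, d_H(v^{(i)},v^{(j)})\ge\sigma s/8$, which is the lemma.

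A displacement that does not shrink with the bias is not ``too large'': the lemma only asks for a lower bound on the separation. It also does not conflict with the KL requirement, because the KL divergence depends only on $\epsilon$ ($O(\epsilon^2)$ per coordinate, whatever the atom locations). Your ``main obstacle'' disappears once separation and indistinguishability are governed by two independent parameters: a fixed spacing $\sigma$ and a small bias $\epsilon$.

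Your alternatives do not close the gap. Changing or randomizing $b_k$ cannot create strict concavity for a two-point law. The per-coordinate dual is $b_k\pi_k+\bbE[\min(0,c_k-\pi_k)]$, which is linear in $b_k$, so randomizing $b_k$ just replaces it by its mean. The function stays piecewise linear with kinks only at the atoms, and its maximizers remain at an atom or on a flat segment between them.

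The tilted smooth density may well move the median by $\Theta(\sigma)$ at $O(\sigma^2)$ KL cost, but you leave that computation undone. More importantly, it works against the purpose the lemma serves. With a density, $\mathcal J_k''(\pi_k)=-p_{c_k}(\pi_k)$ (minus the density of $c_k$) near the optimum, so the expected dual is locally quadratic. Misplacing $\pi_k$ by $\Theta(\sigma)$ then costs only $\Theta(\sigma^2)$ in excess risk, and Fano yields only $\Omega(s/N)$, the warm-start rate.

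The paper's construction is deliberately non-smooth. It gives the sharpness bound $\bbE[u(\pi^*(\cD_v),P)-u(\pi,P)]\ge\frac{\epsilon}{2}\|\pi-\pi^*(\cD_v)\|_1$ of Lemma~\ref{lm:lower-bound-step-3}. A constant separation $\sigma s/8$ multiplied by the slope $\epsilon\asymp 1/\sqrt N$ is exactly what produces $\Omega(s/\sqrt N)$.
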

\textit{Proof sketch of Lemma~\ref{lm:lower-bound-step-1}.}  
    The proof relies on the construction of a family of distributions where identifying the optimal multiplier is equivalent to estimating high-dimensional binary vectors. This relies on the following steps:
    \begin{itemize}[leftmargin=*]
        \item \textbf{Instance restriction}: We first restrict our problem instance $P$ to the form $(c, \mathbf{I}_s, \frac{1}{2}\cdot \mathbf{1}_s, 0_{t \times s}, 0_t)$
        \[
            \min_{x \in \{0, 1\}^s} c^\top x \quad \textup{s.t.} \quad x_k \geq \frac{1}{2} \quad \forall k = 1, \dots, s.
        \]
         For instances of this form, we show that the Lagrangian dual objective $u(\pi, P)$ can be written as $u(\pi, P) = \sum_{k = 1}^s\min\left(\frac{\pi_k}{2}, c_k - \frac{\pi_k}{2}\right)$.
         
        \item \textbf{Distribution construction}: Since the constraints are fixed, the instance $P$ is determined solely by the objective vector $c$. This means we can define a problem distribution by constructing a distribution over the objective vector $c$ only. Given a binary vector $v \in \{0, 1\}^s$, inducing a problem distribution $\cD_v$, we define a distribution on $c$ as follows: for any coordinate $k$, $c_k$ takes value in $\{\mu, \mu + \sigma\}$ for some $\mu, \sigma > 0$ and $\mu+\sigma < \pi_{\max}$, and (i) if $v_k = 1$, then $\bbP(c_k = \mu + \sigma) = \frac{1 + \epsilon}{2}$, and $\bbP(c_k = \mu) = \frac{1 - \epsilon}{2}$, (ii) if $v_k = 0$, then $\bbP(c_k = \mu + \sigma) = \frac{1 - \epsilon}{2}$, and $\bbP(c_k = \mu) = \frac{1 + \epsilon}{2}$. Here, $\epsilon > 0$ is a value we will choose later. Under this construction, we can show that $\pi^*(\cD_v) = \mu \mathbf{1}_s + \sigma v$.

        \item \textbf{Distribution family construction}: Consequently, for two binary vector $v^{(i)}, v^{(j)} \in \{0, 1\}^s$, we have $\|\pi^*(\cD_{v^{(i)}}) - \pi^*(\cD_{v^{(j)}})\|_1 = \sigma \|v^{(i)} - v^{(j)}\|_1$. Applying Varshamov-Gilbert bound in Lemma \ref{lm:Varshamov-Gilbert-bound}, we can select a subset of vectors $\mathcal V = \{v^{(1)}, \dots, v^{(M)}\} \subset \{0, 1\}^s$ with pairwise Hamming distance $s/8$. This induced a family of problem distributions $\{\cD_{v}\}_{v \in \mathcal V}$ that satisfies the geometric separation property.
    \end{itemize}
    This leads to the postulated claim.
\qed

\textbf{Step 2: Statistical indistinguishability.} To apply Fano's inequality, we must demonstrate that the distributions are statistically hard to identify despite the large geometric separation of their optimal multipliers established in Step 1. We quantify this difficulty by upper-bounding the KL divergence between the distributions in our constructed family.

\begin{lemma} \label{lm:lower-bound-step-2}
    Let $\{\cD_v\}_{v \in \mathcal V}$ be the family of distributions defined in Step 1 with bias parameters $\epsilon \in (0, 1/2)$. For any distinct pair of binary vectors $v, v' \in \mathcal V$, the KL divergence between the corresponding $N$-sample product measures is bounded by $\textup{KL}(\cD_{v}^N \parallel \cD_{v'}^N) \leq 4Ns\epsilon^2$.
\end{lemma}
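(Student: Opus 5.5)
The plan is to reduce the computation to a single coordinate by tensorization, and then bound one Bernoulli-type KL divergence directly.

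\textbf{Step 1: Reduce to one sample and one coordinate.} Because the $N$ samples are i.i.d., the chain rule for KL divergence gives
\[
\textup{KL}(\cD_v^N \parallel \cD_{v'}^N) = N\,\textup{KL}(\cD_v \parallel \cD_{v'}).
\]
In the Step 1 construction, an instance is determined by its objective vector $c$, and the coordinates $c_1, \dots, c_s$ are drawn independently. So $\cD_v$ is a product over $k$ of two-point laws on $\{\mu, \mu+\sigma\}$, and additivity of KL over product measures gives
\[
\textup{KL}(\cD_v \parallel \cD_{v'}) = \sum_{k=1}^s \textup{KL}(\cD_{v,k} \parallel \cD_{v',k}).
\]
Every coordinate with $v_k = v'_k$ contributes zero. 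For the remaining coordinates, the two laws are $\mathrm{Ber}((1+\epsilon)/2)$ and $\mathrm{Ber}((1-\epsilon)/2)$, in one order or the other. Both orderings give the same value by symmetry.

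\textbf{Step 2: Bound the single-coordinate divergence.} I would use the $\chi^2$ upper bound $\textup{KL}(p\parallel q)\le \chi^2(p\parallel q)$. Take $p=\mathrm{Ber}((1+\epsilon)/2)$ and $q=\mathrm{Ber}((1-\epsilon)/2)$. Then
\[
\chi^2(p\parallel q) = \frac{\epsilon^2}{(1-\epsilon)/2} + \frac{\epsilon^2}{(1+\epsilon)/2} = \frac{4\epsilon^2}{1-\epsilon^2}.
\]
Since $\epsilon<1/2$, we have $1-\epsilon^2 \ge 3/4$, so this is at most $16\epsilon^2/3$.

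This is slightly weaker than the claimed per-coordinate constant of $4\epsilon^2$. To reach the stated constant, I would instead compute the exact value
\[
\textup{KL} = \epsilon\log\frac{1+\epsilon}{1-\epsilon}.
\]
Writing it this way gives $\epsilon\cdot 2\operatorname{artanh}(\epsilon)$. Using $\operatorname{artanh}(\epsilon)\le \epsilon/(1-\epsilon^2)\le \tfrac43\epsilon$, this is at most $\tfrac83\epsilon^2\le 4\epsilon^2$.

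\textbf{Step 3: Sum the bounds.} The number of differing coordinates is at most $s$, so
\[
\textup{KL}(\cD_v^N\parallel\cD_{v'}^N)\le N s\cdot 4\epsilon^2,
\]
which is the claimed bound.

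\textbf{Main obstacle.} The delicate point is justifying the product structure. We need the $c_k$ to be independent across coordinates under $\cD_v$, and the map $c\mapsto P$ to be injective, with all other components of $P$ fixed. This ensures the KL divergence on instances equals the KL divergence on $c$. I would state this explicitly as part of the Step 1 construction. The inequality in Step 2 is routine once $\epsilon<1/2$ is used.
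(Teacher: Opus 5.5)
Your proposal is correct and takes the same route as the paper. You tensorize over the $N$ i.i.d.\ samples, use independence of the coordinates $c_k$ to split the divergence into at most $s$ Bernoulli terms, and bound each term; the paper stops at the $\chi^2$ bound $4\epsilon^2/(1-\epsilon^2)$, which slightly exceeds $4\epsilon^2$ and so justifies its constant only up to a constant factor, whereas your exact computation $\epsilon\log\frac{1+\epsilon}{1-\epsilon}\le \frac{8}{3}\epsilon^2$ gives the stated constant $4$ exactly.
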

\proof 
    Here, we exploit the independent structure of the constructed distribution to decompose the KL divergence. First, by the additivity of KL divergence for product measures, we have $\textup{KL}(\cD_{v}^N \parallel \cD_{v'}^N) = N \cdot \textup{KL}(\cD_{v} \parallel \cD_{v'})$. Second, for any problem distribution $\cD_v$, the distribution of the objective coefficient $c_k$ (determined by a binary value $v_k$) is independent due to our construction. Therefore, $\textup{KL}(\cD_v \parallel \cD_{v'}) = \sum_{k = 1}^s \textup{KL}(\cD_{v_k} \parallel \cD_{v_k'})$, where $\cD_{v_k}$ denotes the marginal distribution of $c_k$ induced by $v_k$. Finally, note that $\textup{KL}(\cD_{v_k} \parallel \cD_{v_k'})$ is equivalent to the KL divergence between two Bernoulli distributions with parameters $p = \frac{1 + \epsilon}{2}$ and $q = \frac{1 - \epsilon}{2}$. Using $\chi^2$-upper bound, we have $\textup{KL}(\cD_{v_k} \| \cD_{v'_k}) \leq \chi^2(\cD_{v_k} \| \cD_{v'_k}) = \frac{4\epsilon^2}{1 - \epsilon^2} = \cO(\epsilon^2)$ for $\epsilon \in (0, 1/2)$. Summing over at most $s$ different coordinates yields the final conclusion.
\qed

\textbf{Step 3: Risk to parameter estimation reduction.} Finally, we link the hardness of parameter estimation to generalization risk by analyzing the geometry of the Lagrangian dual value function. We show that the objective is locally sharp, meaning that any estimation in the $\ell_1$-norm translates to a linear penalty in the excess risk. 
\begin{lemma} \label{lm:lower-bound-step-3}
    Let $\cD_v$ be any distribution in the constructed family with bias parameter $\epsilon \in (0, 1/2)$. For any estimator $\pi$, the expected excess risk is lower-bounded by
    \[
        \mathbb{E}_{P \sim \mathcal{D}_v}[u(\pi^*(\mathcal{D}_v), P) - u(\pi, P)] \ge \frac{\epsilon}{2} \|\pi^*(\mathcal{D}_v) - \pi\|_1.
    \]
\end{lemma}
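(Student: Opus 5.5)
The approach is to use the separable structure of the hard instances from Step~1. The expected dual objective splits into $s$ one-dimensional concave, piecewise-linear functions. For each of them we show that the slope, on either side of its maximizer, is at least $\epsilon/2$ in absolute value. Concavity then gives a linear lower bound on the gap, and summing over coordinates gives the $\ell_1$ bound.

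\textbf{Step (a): the per-instance formula.} For an instance $(c, \mathbf{I}_s, \tfrac12\mathbf{1}_s, 0, 0)$ with $x\in\{0,1\}^s$, the Lagrangian is $\sum_k \bigl[(c_k-\pi_k)x_k + \pi_k/2\bigr]$. Minimizing each coordinate separately over $x_k\in\{0,1\}$ gives
\[
u(\pi,P)=\sum_{k=1}^s \Bigl(\tfrac{\pi_k}{2}+\min(0,c_k-\pi_k)\Bigr)=\sum_{k=1}^s \min\Bigl(\tfrac{\pi_k}{2},\,c_k-\tfrac{\pi_k}{2}\Bigr).
\]
This is the formula quoted in the sketch of Lemma~\ref{lm:lower-bound-step-1}. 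Taking expectations under $\cD_v$ gives $\bbE_{P\sim\cD_v}[u(\pi,P)]=\sum_k f_k(\pi_k)$, where
\[
f_k(t)=\bbE\bigl[\min(t/2,\,c_k-t/2)\bigr]
\]
depends only on the marginal law of $c_k$, which is supported on $\{\mu,\mu+\sigma\}$.

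\textbf{Step (b): slopes of $f_k$.} For a fixed value of $c_k$, the map $t\mapsto\min(t/2,c_k-t/2)$ has slope $+1/2$ for $t<c_k$ and $-1/2$ for $t>c_k$. Write $p_+=\bbP(c_k=\mu+\sigma)$ and $p_-=\bbP(c_k=\mu)$. Then $f_k$ is concave and piecewise linear with three slopes:
\begin{itemize}
\item $+1/2$ on $[0,\mu)$;
\item $(p_+-p_-)/2$ on $(\mu,\mu+\sigma)$;
\item $-1/2$ on $(\mu+\sigma,\pi_{\max}]$.
\end{itemize}
The middle slope depends on $v_k$:
\begin{itemize}
\item If $v_k=1$, the middle slope is $+\epsilon/2$, so the maximizer is $t^*=\mu+\sigma$. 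Every slope to the left of $t^*$ is at least $\epsilon/2$, and the slope to the right is $-1/2\le-\epsilon/2$.
\item If $v_k=0$, the middle slope is $-\epsilon/2$, so $t^*=\mu$. The slope to the left is $1/2\ge\epsilon/2$, and every slope to the right is at most $-\epsilon/2$.
\end{itemize}
In both cases $t^*=\mu+\sigma v_k$, which recovers $\pi^*(\cD_v)=\mu\mathbf{1}_s+\sigma v$. The assumption $\mu+\sigma<\pi_{\max}$ ensures this maximizer lies in $\Pi$.

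\textbf{Step (c): from slopes to the gap.} A concave piecewise-linear function whose slopes are at least $\epsilon/2$ in absolute value, with sign pointing toward its maximizer, satisfies
\[
f_k(t^*)-f_k(t)\ge \tfrac{\epsilon}{2}\,|t-t^*|.
\]
To see this, integrate the derivative from $t$ to $t^*$; the integrand is at least $\epsilon/2$ in the direction of $t^*$. Applying this with $t=\pi_k$ for every coordinate and summing over $k$ gives
\[
\bbE_{P\sim\cD_v}\bigl[u(\pi^*(\cD_v),P)-u(\pi,P)\bigr]\ge\tfrac{\epsilon}{2}\,\|\pi-\pi^*(\cD_v)\|_1 .
\]
If $\pi$ is a data-dependent estimator, the bound holds pointwise for each realization of $\pi$, so it survives taking expectation over $S$.

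\textbf{Main obstacle.} The only delicate point is the slope analysis in Step~(b). One has to handle both signs of $v_k$, and check that the outer slopes $\pm1/2$ dominate $\epsilon/2$, which holds because $\epsilon<1/2$. The rest is bookkeeping.
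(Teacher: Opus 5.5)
Your proposal is correct and follows essentially the same route as the paper: both split the expected dual into the coordinate functions $\mathcal J_k$ (your $f_k$), identify the slopes $+\tfrac12$, $\pm\tfrac{\epsilon}{2}$, $-\tfrac12$ on the three intervals cut out by $\mu$ and $\mu+\sigma$, and sum the per-coordinate bound. The difference is cosmetic: the paper computes the gap case by case ($\pi_k<\mu$, $\pi_k\in[\mu,\mu+\sigma]$, $\pi_k>\mu+\sigma$) for $v_k=1$ and calls $v_k=0$ analogous, whereas your integration of the slope bound handles all cases, including $v_k=0$, in one step.
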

\textit{Proof sketch.}
    We analyze the expected Lagrangian dual value function $R(\pi) \triangleq \bbE_{P \sim \cD_v}[u(\pi, P)]$. Due to our construction where $u(\pi, P)$ separates coordinate-wise and the distribution of $c_k$ induced by $v_k$, we can show that $R(\pi)$ can be decomposed as $R(\pi) = \sum_{k = 1}^s \mathcal J_k(\pi_k)$, where $\mathcal J_k(\pi_k) = \bbE_{c \sim \cD_v}\big[\min\left(\frac{1}{2}\pi_k, c_k - \frac{\pi_k}{2}\right)\big]$.

    We then analyze the (sub)gradient of the univariate function~$\mathcal J_k$. Recall that $\mathcal J_k$ is a weighted average of two \textit{tent} functions centered at $\mu$ and $\mu + \sigma$. If $v_k = 1$, explicit calculation gives us $(\pi^*(\cD_v))_k = \mu + \sigma$ and we have the following case:
    \begin{itemize}[leftmargin=*]
        
        \item If $\pi_k < \mu$, we show that $\mathcal J_{k}((\pi^*(\cD_v))_k) - \mathcal J_k(\pi_k) = \frac{\epsilon}{2}((\pi^*(\cD_v))_k - \mu) + \frac{1}{2}(\mu - \pi_k) \geq \frac{\epsilon}{2}\abs{\pi_k - (\pi^*(\cD_v))_k}$.
        
        \item If $\pi_k > \mu + \sigma$, we show that $\mathcal J_{k}((\pi^*(\cD_v))_k) - \mathcal J_k(\pi_k) = \frac{1}{2}(\pi_k - (\pi^*(\cD_v))_k) \geq \frac{\epsilon}{2}\abs{\pi_k - (\pi^*(\cD_v))_k}$.

        \item If $\pi_k \in [\mu, \mu + \sigma]$, we show that $\mathcal J_k((\pi^*(\cD_v))_k) - \mathcal J_k(\pi_k) = \frac{\epsilon}{2}\abs{(\pi^*(\cD_v))_k - \pi_k}$.
    \end{itemize}
    Therefore, in any case, we have $\mathcal J_{k}((\pi^*(\cD_v))_k) - \mathcal J_k(\pi_k) \geq \frac{\epsilon}{2}\abs{\pi_k - (\pi^*(\cD_v))_k}$. Similarly, we can show that this inequality holds for $v_k = 0$. Summing across coordinates $k = 1, \dots, s$, we have the final claim. 
\qed

    \textbf{Step 4: Completing the proof.} We combine the previous steps to lower-bound the minimax risk by reducing the estimation problem to a multi-hypothesis testing problem using Fano's method in Theorem~\ref{thm:fano}. 
    
    First, let $\cV = \{v^{(1)}, \dots, v^{(M)}\} \subset \{0, 1\}^s$ denote the fixed packing set constructed in Step 1. With notation abuse, we denote $J$ the random index drawn uniformly from $\{1, \dots, M\}$, and define the random vector $V = v^{(J)} \in \mathcal V$. Conditioned on $V$, we draw a set of $N$ problem instances $S \sim \cD^N_V$. This establishes the Markov chain $J \rightarrow V \rightarrow S$.

    Using Fano's inequality, the expected estimation error from any learning algorithm $\pi$ is lower-bounded by:
    \begin{equation} \label{eq:step-4}
        \inf_{\pi} \sup_{v \in \cV} \bbE[\|\pi(S) - \pi^*(\cD_v)\|_1] \geq \delta \left(1 - \frac{I(J; S) + \log 2}{\log M}\right),
    \end{equation}
    where $\delta = \Omega(\sigma s)$ is the geometric separation established in Lemma~\ref{lm:lower-bound-step-1}. From~\eqref{eq:information-kl-upper-bound} and Lemma~\ref{lm:lower-bound-step-2}, we have
    \[
        I(J; S) \leq \max_{i \neq j}\textup{KL}(\cD^N_{v^{(i)}} \| \cD^N_{v^{(j)}}) \leq 4N s\epsilon^2.
    \]
    Finally, we choose the perturbation parameter $\epsilon = \Theta\left(\frac{1}{\sqrt{N}}\right)$ to balance the information bound against the packing set capacity, ensuring that $I(J; S) \leq \frac{1}{2}\log M$. Substituting this choice into~\eqref{eq:step-4} guarantees that the testing error term $\left(1 - \frac{I(J; S) + \log 2}{\log M}\right)$ is bounded away from zero by a constant. Moreover, the choice of the perturbation scale $\epsilon$ also dictates the geometric separation $\delta$, we have $\delta = \Omega\left(\frac{s}{\sqrt{N}}\right)$. Combining this with Lemma~\ref{lm:lower-bound-step-3}, we have the final minimax lower bound of $\Omega\left(\frac{s}{\sqrt{N}}\right)$.

\begin{remark}[Dependence on problem constants $B$ and $\pi_{\max}$]
    For clarity of the proof, the lower-bound construction in Theorem~\ref{thm:minimax-lower-bound} intentionally uses normalized problem instances, effectively fixing $B = 1$ and $\pi_{\max} = 1$. However, we note that this construction can be easily extended to capture the explicit dependency on arbitrary parameter bounds. Specifically, by modifying the restricted problem class to use the scaled coupling matrix $A = B \mathbf{I}_s$, the constraint vector $b = \frac{B}{2}\mathbf{1}_s$, and the objective coefficients $c_k \in \{0, B\pi_{\max}\}$, the subsequent proof proceeds identically. This straightforward rescaling yields a minimax lower bound of $\Omega(B\pi_{\max}s/\sqrt{N})$, which confirms that the dependence on the problem constants $B$ and $\pi_{\max}$ on the upper bound in Theorem~\ref{thm:upper-bound-learning-lagrangian} is indeed tight.
\end{remark}

\begin{remark}
    Comparing the upper-bound of~$\cO(s^{1.5}/\sqrt{N})$ from Theorem~\ref{thm:upper-bound-learning-lagrangian} and the minimax lower-bound of $\Omega(s/\sqrt{N})$ from Theorem~\ref{thm:minimax-lower-bound}, we observe a gap of $\sqrt{s}$. While one might hypothesize that a more refined covering analysis that exploits the piecewise-linear structure of the Lagrangian dual function could tighten the ERM bound, the number of linear pieces $K$ can scale exponentially with $s$ (e.g., $K = 2^s$), rendering such structural arguments ineffective. This naturally raises the question: is the universal lower-bound loose, or does there exist an alternative learning algorithm that achieves the minimax optimal rate $\cO(s/\sqrt{N})$? In the next section, we will answer this question constructively.
\end{remark}

\subsection{Minimax Optimality with Stochastic Gradient Ascent}   
We now demonstrate that the gap of $\sqrt{s}$ can be closed by shifting from ERM to an online-to-batch learning approach. Specifically, we show that Stochastic Gradient Ascent (SGA) with averaging (Algorithm~\ref{alg:sGA}) can achieve an expected excess risk that matches the minimax lower bound presented in Theorem~\ref{thm:minimax-lower-bound}.

\begin{algorithm}[H] 
  \caption{Stochastic Subgradient Ascent}
  \label{alg:sGA}
  \begin{algorithmic}
    \STATE {\bfseries Initialize:} $\pi_1 = \mathbf{0} \in \mathbb{R}^s$
    \FOR{$t = 1, \dots, N$}
    \STATE Receive a training MILP instance $P_t = (c_t, A_t, b_t, C_t, d_t) \sim \mathcal{D}$.
    \STATE Solve the Lagrangian subproblem for $\pi_t$ to get $x_t^* \in \arg\min \{ c_t^\top x + \pi_t^\top (b_t - A_tx)~:~x \in \bbR^m_+ \times \{0, 1\}^p,~C_tx \geq d_t\}$.
    \STATE Compute the unbiased stochastic subgradient: $g_t = b_t - A_t x_t^*$.
    \STATE Update and project onto the domain $\Pi = [0, \pi_{\max}]^s$: $\pi_{t+1} = \text{Proj}_{\Pi}\left( \pi_t + \eta g_t \right)$.
    \ENDFOR
    \STATE {\bfseries Output:} The averaged multipliers $\bar{\pi}_N = \frac{1}{N}\sum_{t=1}^N \pi_t$.
  \end{algorithmic}
\end{algorithm}

Unlike ERM, which solves a static sample-average approximation, SGA processes the problem instances sequentially, updating the Lagrangian multipliers using unbiased stochastic (sub)gradients. By leveraging standard online learning regret bounds and the concavity and Lipschitzness of the expected utility function, we obtain the following guarantee. 
\begin{theorem}[Minimax optimality of SGA] \label{thm:minimax-SGA}
    If the SGA algorithm (Algorithm \ref{alg:sGA}) is run for $N$ iterations with a constant step size $\eta = \frac{\pi_{\max}}{2B\sqrt{N}}$, the expected excess risk of the learned (averaged) Lagrangian multipliers $\bar{\pi}_N$ is upper-bounded by
    \[
        \bbE_{S \sim \cD^N}[\mathcal{E}(\bar{\pi}_N)] \leq \frac{2B\pi_{\max} s}{\sqrt{N}} = \cO\left(\frac{s}{\sqrt{N}}\right).
    \]
\end{theorem}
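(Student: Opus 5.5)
We follow the standard online-to-batch argument for projected stochastic subgradient methods, applied to the concave population objective $R(\pi) \triangleq \bbE_{P\sim\cD}[u(\pi,P)]$.

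\textbf{Step 1: unbiased supergradients of $R$.} Fix $\pi^\star \in \arg\max_{\pi\in\Pi} R(\pi)$; it exists because $R$ is concave and Lipschitz on the compact set $\Pi$, by Proposition~\ref{prop:geometric-property} and Corollary~\ref{cor:bounded-diam}. Since $\pi_t$ depends only on $P_1,\dots,P_{t-1}$ and $P_t$ is a fresh independent draw, Proposition~\ref{prop:geometric-property}(ii) gives, pointwise in $P_t$,
\[
u(\pi^\star,P_t)\le u(\pi_t,P_t)+g_t^\top(\pi^\star-\pi_t).
\]
Taking the conditional expectation given $\mathcal F_{t-1}=\sigma(P_1,\dots,P_{t-1})$ yields
\[
R(\pi^\star)-R(\pi_t)\le \bbE[g_t^\top(\pi^\star-\pi_t)\mid\mathcal F_{t-1}].
\]
So it suffices to control $\sum_t \bbE[g_t^\top(\pi^\star-\pi_t)]$.

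\textbf{Step 2: one-step analysis of projected ascent.} Projection onto the convex set $\Pi$ is nonexpansive and $\pi^\star\in\Pi$, so
\[
\|\pi_{t+1}-\pi^\star\|_2^2\le\|\pi_t-\pi^\star\|_2^2+2\eta\, g_t^\top(\pi_t-\pi^\star)+\eta^2\|g_t\|_2^2.
\]
Rearranging gives
\[
g_t^\top(\pi^\star-\pi_t)\le\frac{\|\pi_t-\pi^\star\|_2^2-\|\pi_{t+1}-\pi^\star\|_2^2}{2\eta}+\frac{\eta}{2}\|g_t\|_2^2 .
\]
Summing over $t=1,\dots,N$ telescopes the first term. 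Then:
\begin{itemize}
\item bound $\|\pi_1-\pi^\star\|_2^2\le D^2=\pi_{\max}^2 s$, using $\pi_1=\mathbf 0\in\Pi$;
\item bound $\|g_t\|_2^2\le 4B^2 s$ by Proposition~\ref{prop:geometric-property}(ii).
\end{itemize}
This gives
\[
\sum_{t=1}^N \bbE[R(\pi^\star)-R(\pi_t)]\le \frac{\pi_{\max}^2 s}{2\eta}+2\eta N B^2 s .
\]

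\textbf{Step 3: averaging and step size.} By concavity of $R$ and Jensen's inequality, $R(\bar\pi_N)\ge \frac1N\sum_t R(\pi_t)$, so dividing the previous bound by $N$ controls $\bbE[R(\pi^\star)-R(\bar\pi_N)]$, which is exactly the expected excess risk. Plugging in $\eta=\pi_{\max}/(2B\sqrt N)$ makes both terms equal to $B\pi_{\max}s\sqrt N$. Dividing by $N$ gives the stated bound $2B\pi_{\max}s/\sqrt N$.

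The crux, and the only place where the $\sqrt s$ gain over Theorem~\ref{thm:upper-bound-learning-lagrangian} appears, is that the product $D\cdot L=\pi_{\max}\sqrt s\cdot 2B\sqrt s$ enters linearly. There is no additional $\sqrt{\log\cN}\sim\sqrt s$ factor of the kind incurred by the uniform-convergence/Dudley argument. The step needing most care is Step 1: the supergradient inequality must be applied before taking expectations, and we must use that $\pi_t$ is $\mathcal F_{t-1}$-measurable and independent of $P_t$, so that $g_t$ is a conditionally unbiased supergradient of $R$ at $\pi_t$. This also requires measurability of the selection $x^*(\pi,P)$, which we assume, for instance via a deterministic tie-breaking rule.
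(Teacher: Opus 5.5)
Your proposal is correct and is essentially the paper's proof. It uses the same steps: the nonexpansive-projection one-step inequality; the supergradient inequality applied pointwise and then averaged using that $\pi_t$ is independent of $P_t$; telescoping with $\|\pi_1-\pi^\star\|_2^2\le s\pi_{\max}^2$ and $\|g_t\|_2^2\le 4B^2 s$; and Jensen with $\eta=\pi_{\max}/(2B\sqrt N)$. Your bookkeeping of constants, keeping the $\eta/2$ factor so that each term equals $B\pi_{\max}s\sqrt N$, is cleaner than the paper's write-up, which drops that factor in an intermediate line before recovering it.
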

\textit{Proof sketch.}
    The proof relies on standard online convex optimization (OCO) techniques. Since $g_t$ is an unbiased subgradient with $\|g_t\|_2 \leq 2B\sqrt{s}$, applying the projection update rule bounds the cumulative regret. Taking the expectation over the training problem instances and applying Jensen's inequality to the averaged multiplier $\overline{\pi}_N$ yields the final rate. See Appendix~\ref{apx:SGA} for the detailed proof. 
\qed

Theorem~\ref{thm:minimax-SGA} constructively shows that the $\Omega(s/\sqrt{N})$ lower bound established in Theorem~\ref{thm:minimax-lower-bound} is fundamentally tight, completing the statistical picture for learning the Lagrangian multipliers via directly maximizing the utility function. 
    
\section{Learning to Warm-start Lagrangian Relaxation} \label{sec:warm-start-lagrangian-relaxation}
In this section, we investigate the alternative paradigm of learning to warm-start, in which the objective shifts to minimizing the expected distance between the initial and optimal Lagrangian multipliers. 

\subsection{Problem Settings} \label{sec:warm-start-problem-setting}
We consider a setting in which a practitioner employs an iterative first-order solver, specifically subgradient ascent, to solve the Lagrangian dual problem for a new problem instance $P$. Our goal is to learn a deterministic initialization $\phi \in \Pi$ that minimizes the time to convergence. Classical results ~\citep[Theorem~3.2.2]{nesterov2018lectures} establish that for a non-smooth concave function with bounded subgradients, the number of iterations $T$ needed for subgradient ascent to reach an $\epsilon$-optimal solution, {that is $\overline{\pi}$ such that $\ell(\overline{\pi}) - \ell^* \leq \epsilon$}, is bounded by $T \propto L^2\|\phi - \pi^*\|_2^2/\epsilon^2$, where $L$ is the Lipschitz constant and $\phi$ is the initialization. Therefore, minimizing the squared Euclidean distance is a theoretically reasonable strategy for accelerating the solver. 

To formulate a concrete learning objective, we must address the fact that the optimal Lagrangian multiplier for a MILP instance might \textit{not be unique}. Let $\Pi^*(P) = \arg\max_{\pi \in \Pi}u(\pi, P)$ denote the set of optimal multipliers. To address the uniqueness concern, we can simply apply an arbitrary, but consistent tie-breaking rule when defining the optimal Lagrangian multipliers $\pi^*(P)$. For example, a natural choice is considering the \textit{minimum norm solution}, that is, 
\[
    \pi^*(P) \triangleq \arg \min_{\pi \in \Pi^*(P)} \|\pi\|_2^2.
\]
Geometrically, this is equivalent to the projection onto the solution set, i.e., $\textup{Proj}_{\Pi^*(P)}(0_s)$. Note that, by the standard \textit{Hilbert projection theorem} \citep[Theorem~12.3]{rudin1991functional}, because $\Pi^*(P)$ is closed and convex, this minimum norm projection is guaranteed to exist and be strictly unique. Motivated by this connection, we define the warm-start loss for an initialization $\phi$ on a problem instance $P$ as $\ell(\phi, P) \triangleq \|\phi - \pi^*(P)\|_2^2$.

\textbf{Objective of study.} Consistent with our framework in Section \ref{sec:settings}, we evaluate the performance of the learned initialization using expected excess risk. Formally, let $R(\phi) \triangleq \bbE_{P \sim \cD}[\|\phi - \pi^*(P)\|_2^2]$ denote the population risk, we seek to bound the expected sub-optimality of the ERM estimator $\hat{\phi}$:
\[
    \cE(\hat{\phi}) \triangleq \bbE_{S \sim \cD^N}\left[R(\hat{\phi}(S)) - \min_{\phi \in \Pi}R(\phi)\right].
\]
In contrast to the data-driven Lagrangian multiplier learning problem in Section \ref{sec:settings}, which involved maximizing a non-smooth concave function, this objective corresponds to a strongly convex minimization problem. As shown in the next section, this structure allows us to achieve a minimax optimality rate of $\Theta(s/N)$.
 
\subsection{Minimax Optimality for Learning to Warm-start Lagrangian Relaxation}
We now establish that the empirical mean estimator $\hat{\phi}(S)$ is minimax optimal for the learning-to-warm-start problem. We begin with the upper bound on the expected excess risk.
\begin{theorem}[Risk upper bound] \label{thm:upper-bound-warm-start}
     Let $S$ be a set of $N$ problem instances drawn i.i.d.~from $\cD$. The expected excess risk of the ERM estimator $\hat{\phi}(S)$ satisfies:
    \[
        \cE(\hat{\phi}) = \cO\left(\frac{s}{N}\right).
    \]
\end{theorem}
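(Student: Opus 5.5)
The plan is to exploit the quadratic structure of the loss. Write $\mu \triangleq \bbE_{P\sim\cD}[\pi^*(P)]$. Since $\pi^*(P) \in \Pi$ almost surely and $\Pi$ is convex, we have $\mu \in \Pi$. The ERM over $\Pi$ of $\frac1N\sum_i \|\phi - \pi^*(P_i)\|_2^2$ is then exactly the empirical mean $\hat\phi(S) = \frac1N\sum_{i=1}^N \pi^*(P_i)$, which also lies in $\Pi$, so the projection onto $\Pi$ is inactive.

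The bias–variance identity gives
\[
    R(\phi) = \|\phi - \mu\|_2^2 + \bbE_{P \sim \cD}\|\pi^*(P) - \mu\|_2^2 .
\]
Hence $\min_{\phi\in\Pi} R(\phi) = R(\mu)$, and for every $\phi$ the excess risk is exactly $R(\phi) - R(\mu) = \|\phi-\mu\|_2^2$. This step uses no structure of the MILP beyond measurability of $P\mapsto \pi^*(P)$. That measurability I would take for granted, or justify briefly: the argmax correspondence of the concave, continuous $u(\cdot,P)$ composed with the continuous minimum-norm selection is measurable.

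The excess risk therefore reduces to the variance of a sample mean of i.i.d.\ vectors:
\[
    \cE(\hat\phi) = \bbE_S\|\hat\phi(S) - \mu\|_2^2 = \frac{1}{N}\,\bbE_{P}\|\pi^*(P) - \mu\|_2^2 .
\]
This follows by expanding the square and using independence, so that the cross terms vanish. It remains to bound the per-sample variance, which I would do coordinatewise. By Assumption~\ref{asmp:restricted-search-main}, each coordinate $\pi^*(P)_k \in [0,\pi_{\max}]$, so its variance is at most $\pi_{\max}^2/4$ (Popoviciu's inequality). Summing over the $s$ coordinates gives $\bbE\|\pi^*(P)-\mu\|_2^2 \le s\pi_{\max}^2/4 = D^2/4$, and therefore
\[
    \cE(\hat\phi) \le \frac{s\,\pi_{\max}^2}{4N} = \cO\!\left(\frac{s}{N}\right).
\]

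The only delicate point is the well-definedness and measurability of $\pi^*(P)$ together with $\mu\in\Pi$. Measurability underlies the first two steps; $\mu\in\Pi$ holds because $\Pi$ is compact and convex. Everything else is an exact computation, with no uniform-convergence or covering argument needed. This contrasts with Section~\ref{sec:upper-bound-data-driven-learning-lagrangian}, and it is the source of the fast $1/N$ rate.
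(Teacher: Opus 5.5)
Your proposal is correct and follows essentially the same route as the paper. Both arguments note that the ERM over $\Pi$ is the sample mean of the $\pi^*(P_i)$, reduce the excess risk exactly to $\|\hat{\phi}(S) - \bbE_{P\sim\cD}[\pi^*(P)]\|_2^2$, and bound its expectation by $s\pi_{\max}^2/(4N)$ via a coordinatewise application of Popoviciu's inequality. Your version is somewhat more careful than the paper's: you check that the mean lies in $\Pi$, so it is the constrained minimizer, and you spell out the $1/N$ variance reduction from independence, whereas the paper's displayed computation misplaces that normalization ($\frac{1}{N}\sum_i$ where $\frac{1}{N^2}\sum_i$ is intended).
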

\textit{Proof sketch.} The result follows by observing that for the squared Euclidean loss, the problem reduces to high-dimensional mean estimation, where { $\hat{\phi}(S) = \frac{1}{N}\sum_{i = 1}^N \pi^*(P_i)$ is simply the sample mean. Since the optimal multipliers lie on a bounded hypercube, applying Popoviciu's inequality~\citep{niculescu2006refinement} to the trace of the estimator's covariance matrix gives the claim. Appendix~\ref{apx:upper-bound-warm-start} presents the detailed derivation.} 
\qed

Finally, we establish a minimax lower bound for the problem of learning to warm-start Lagrangian multipliers.

\begin{theorem}[Minimax lower-bound for learning to warm-start] \label{thm:minimax-lower-bound-warm-start}
    For any learning algorithm $\phi$ taking input as a set $S$ of $N$ i.i.d.~problem instances and output $\phi(S)$, the minimax expected risk is lower-bounded by
    \[
        \inf_{\phi} \sup_{\cD \in \Delta(\cP)} \cE(\phi) = \Omega\left(\frac{s}{N}\right).
    \]
\end{theorem}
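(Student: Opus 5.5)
The plan is to reduce the warm-start problem to mean estimation over a hypercube and then run Fano's method (Theorem~\ref{thm:fano}), following the same template as Theorem~\ref{thm:minimax-lower-bound}.

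\textbf{Step 1: Reduction to mean estimation.} For the squared loss we have the identity
\[
R(\phi) - \min_{\phi'\in\Pi} R(\phi') = \|\phi - m_\cD\|_2^2,
\qquad m_\cD \triangleq \bbE_{P\sim\cD}[\pi^*(P)].
\]
This holds because $\pi^*(P)\in\Pi$ and $\Pi$ is convex, so $m_\cD\in\Pi$ is the minimizer of $R$. The excess risk is therefore exactly the squared $\ell_2$ error of estimating $m_\cD$. It then suffices to exhibit a family of distributions on which every estimator of the mean has squared error $\Omega(s/N)$.

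\textbf{Step 2: Hard instances.} I reuse the restricted instances from the proof sketch of Lemma~\ref{lm:lower-bound-step-1}, namely $\min_{x\in\{0,1\}^s} c^\top x$ subject to $x_k\ge 1/2$. Their dual is
\[
u(\pi,P)=\sum_k \min\bigl(\pi_k/2,\; c_k-\pi_k/2\bigr),
\]
so each coordinate is a tent with peak at $\pi_k=c_k$. For $c_k\in(0,\pi_{\max})$ the optimal multiplier set is the singleton $\{c\}$, so the tie-breaking rule is irrelevant and $\pi^*(P)=c$. Now fix $\mu,\sigma$ with $0<\mu<\mu+\sigma<\pi_{\max}$, for example $\mu=\pi_{\max}/4$ and $\sigma=\pi_{\max}/2$. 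Let $c_k$ be independent on $\{\mu,\mu+\sigma\}$, with $\bbP(c_k=\mu+\sigma)=(1+\epsilon)/2$ if $v_k=1$ and $(1-\epsilon)/2$ if $v_k=0$. Then
\[
m_{\cD_v}=\Bigl(\mu+\tfrac{\sigma}{2}\Bigr)\mathbf 1_s+\tfrac{\sigma\epsilon}{2}(2v-\mathbf 1_s),
\]
and consequently $\|m_{\cD_v}-m_{\cD_{v'}}\|_2^2=\sigma^2\epsilon^2\,d_H(v,v')$.

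\textbf{Step 3: Fano.} By the Varshamov--Gilbert bound (Lemma~\ref{lm:Varshamov-Gilbert-bound}, used with $s\ge 16$, or an adjusted constant for small $s$), there is a packing $\mathcal V$ with $M\ge 2^{s/8}$ and pairwise Hamming distance at least $s/8$. Under the metric $\rho(\phi,\phi')=\|\phi-\phi'\|_2$, the mean vectors are then $2\delta$-separated with $\delta=\tfrac12\sigma\epsilon\sqrt{s/8}$. Lemma~\ref{lm:lower-bound-step-2} applies verbatim and, together with \eqref{eq:information-kl-upper-bound}, gives $I(J;S)\le 4Ns\epsilon^2$. Choosing $\epsilon=c_0/\sqrt N$ with $c_0$ small enough, say $4c_0^2\le \log 2/16$, makes $I(J;S)\le \tfrac14\log M$. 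For $s$ large enough that $\log 2\le \tfrac14\log M$, the Fano factor is then at least $1/2$. This yields $\inf_\phi\sup_v \bbE\|\phi(S)-m_{\cD_v}\|_2\ge \delta/2$. Jensen's inequality, $\bbE\|\cdot\|^2\ge(\bbE\|\cdot\|)^2$, then gives
\[
\inf_\phi\sup_v\cE(\phi)\ge \frac{\delta^2}{4}=\Omega\bigl(\sigma^2 s/N\bigr)=\Omega(s/N).
\]

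\textbf{Main obstacle.} The delicate point is Step 2's claim that $\pi^*(P)=c$ is well defined and lies inside $\Pi$. This needs the uniqueness of the tent maximizer together with $c_k<\pi_{\max}$, and it needs Assumption~\ref{asmp:bounded-constaint-violation} to hold, which it does with $B=1$. A secondary nuisance is that Fano requires $\log M$ to dominate $\log 2$. Small $s$ can instead be handled by a two-point Le Cam argument on a single coordinate, which gives $\Omega(1/N)$; since $s=O(1)$ in that regime, this is $\Omega(s/N)$.
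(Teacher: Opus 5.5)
Your proposal is correct and follows essentially the same route as the paper. You reduce the excess risk to $\|\phi-\mathbb{E}_{P\sim\mathcal{D}}[\pi^*(P)]\|_2^2$, use the restricted instances with $\pi^*(P)=c$, and place independent $(1\pm\epsilon)/2$-biased two-point laws on each $c_k$. You then take a Varshamov--Gilbert packing, bound the KL by $4Ns\epsilon^2$, and apply Fano with $\epsilon\asymp 1/\sqrt{N}$.

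Your version is slightly more careful than the paper's. The support $\{\mu,\mu+\sigma\}\subset(0,\pi_{\max})$ guarantees $\pi^*(P)=c\in\Pi$, whereas the paper's support $\{1,2\}$ implicitly needs $\pi_{\max}\ge 2$ for this. You also make the Jensen step explicit and handle small $s$. There is one harmless constant slip: $4c_0^2\le\log 2/16$ only gives $I(J;S)\le\frac12\log M$, not $\frac14\log M$, which merely lowers the Fano factor to $1/4$.
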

\textit{Proof sketch.}
    Our proof relies on a reduction to the fundamental problem of high-dimensional mean estimation. We sketch the main arguments:
    \begin{itemize}[leftmargin=*]
        \item \textbf{Hard instance construction:} We adopt the construction Theorem \ref{thm:minimax-lower-bound} by restricting the problem instance to take the form $P = (c, \mathbf{I}_s, \frac{1}{2}\cdot \mathbf{1}_s, 0_{s\times t}, 0_{t})$. From Lemma \ref{lm:lower-bound-step-1}, we know that $\pi^*(P) = c$, if $c_k \geq 0$ for all $k$, for $P$ taking such restricted form.
        \item \textbf{Distribution family construction:} Again, we construct a family of distribution $\cD_v$ parameterized by $v \in \{0, 1\}^s$ and a perturbation parameter $\epsilon$, using the set $\mathcal V \subset \{0, 1\}^s$. Again, from Lemma \ref{lm:Varshamov-Gilbert-bound}, $|\mathcal V| \geq 2^{s/8}$ and $d_H(v^{(i)}, v^{(j)}) \geq \frac{s}{8}$. However, a key distinction is that the \textit{warm-start objective is squared loss}, allowing us to show that $\|\phi^*(\cD_{v^{(j)}}) - \phi^*(\cD_{v^{(i)}})\|_2^2 \geq \frac{s\epsilon^2}{8}$. 
        \item \textbf{Minimax lower-bound}: Finally, to satisfy the mutual information condition in Fano's method, we choose $\epsilon \asymp \frac{1}{\sqrt{N}}$. Substituting Theorem \ref{thm:fano}, we have the final claim.
    \end{itemize}
     {We note that the construction above naturally enforces that the optimal Lagrangian multiplier given a problem instance is strictly unique.} The detailed proof is presented in Appendix \ref{apx:warm-start-lagrangian-relaxation}.
\qed
\begin{remark}[Statistical advantage of warm-starting] 
    Comparing the minimax lower bound of $\Omega(s/N)$ derived from Theorem \ref{thm:minimax-lower-bound-warm-start} with the $\Omega(s/\sqrt{N})$ bound for direct multiplier learning from Theorem \ref{thm:minimax-lower-bound}, we observe a fundamental gap in the sample complexity: the learning to warm-start achieves a fast rate and minimax optimality. This improvement arises naturally because we effectively replace a non-smooth maximization problem with a strongly convex mean estimation problem. This explains why warm-start strategies are often more sample-efficient and more robust than direct Lagrangian dual maximization. 
\end{remark}

\section{Conclusion and Future Works}
We established the first rigorous statistical foundation for data-driven Lagrangian relaxation in MILP. Our analysis characterizes the sample complexity of learning multipliers, deriving an upper bound of $\mathcal{O}(s^{1.5}/\sqrt{N})$ and a minimax lower bound of $\Omega(s/\sqrt{N})$. {We constructively close this gap by demonstrating that Stochastic Gradient Ascent (SGA) with averaging strictly achieves the minimax optimal rate of $\Theta(s/\sqrt{N})$}. Crucially, we demonstrated that shifting the objective from direct dual maximization to learning to warm-start fundamentally alters the problem geometry—from non-smooth concave maximization to strongly convex mean estimation—thereby unlocking a fast, minimax-optimal rate of $\Theta(s/N)$. Consequently, these results provide theoretical justification for the empirical success of learning-based acceleration strategies in discrete optimization.

Our work opens several interesting directions for future work. While our current framework relies on problem instances drawn from a stationary problem distribution, extending our analysis to handle shifts in the problem distribution remains an important open question. Furthermore, beyond Lagrangian relaxation, analyzing the sample complexity of other classical decomposition techniques, such as Dantzig-Wolfe or Bender decomposition, is an exciting theoretical question for the learning-to-optimize line of work. 

\section*{Acknowledgments} Research reported in this paper was partially supported through the French ANR through the MIAI Cluster (reference ANR-23-IACL-0006). Viet Anh Nguyen gratefully acknowledges the support from the CUHK’s Improvement on Competitiveness in Hiring New Faculties Funding Scheme, UGC ECS Grant 24210924, and UGC GRF Grant 14208625.


\bibliography{references}
\bibliographystyle{icml2026}

\newpage
\appendix

\onecolumn

\section{Vehicle Routing Problem and Its Decomposition} \label{apx:vhr-decomposition}
Consider the vehicle routing problem, where 
\begin{itemize}
    \item $V$ denotes the set of nodes, each of which represents a customer or the depot,  
    \item $K$ is the set of vehicles,
    \item $c_{ij}$ is the cost (e.g., distance) to travel from node $i$ to node $j$, where $i, j \in V$,
    \item $d_i$ denotes the demand of customer $i$, and
    \item $Q$ denotes the capacity of each vehicle (assume that all vehicles $k \in K$ have the same capacity).
\end{itemize}
The decision variables is $x \in \{0, 1\}^{\abs{V} \times \abs{V} \times \abs{K}}$, where $x_{ijk}$ equals to 1 if the vehicle $k \in K$ travels directly from node $i$ to node $j$, and 0 otherwise. Then, the vehicle routing problem is formulated as follows
\begin{subequations}
    \begin{align}
    \min & \displaystyle \sum_{i\in V, j\in V, k \in K} c_{ij}x_{ijk} \notag \\
        \mathrm{s.t.~} & x \in \{0, 1\}^{\abs{V} \times \abs{V} \times \abs{K}} \\
        & \displaystyle \sum_{k \in K} \sum_{j \in V} x_{jik} = 1 & \forall i\in V \setminus \{0\} \label{eq:vrp-linking}\\
        & \displaystyle \sum_{j \in V\setminus\{0\}}x_{0jk} = 1 & \forall k \in K \label{eq:vrp-preservation1}\\
        & \displaystyle \sum_{i \in V \setminus \{0\}} x_{i0k} = 1 &\forall  k \in K \label{eq:vrp-preservation2}\\
        & \displaystyle \sum_{j \in V} x_{jik} - \sum_{j \in V} x_{ijk} = 0 & \forall i\in V \setminus \{0\}, \forall k \in K \label{eq:vrp-flow-conservation}\\
        & \displaystyle \sum_{i \in V \setminus \{0\}} d_i \Big(\sum_{j \in V} x_{jik}\Big) \leq Q & \forall k \in K . \label{eq:vrp-capacity-restriction}
    \end{align}
\end{subequations}
Here, the constraint \eqref{eq:vrp-linking} is the linking constraint, ensuring that each customer is visited exactly once by a single vehicle. The constraints \eqref{eq:vrp-preservation1} and \eqref{eq:vrp-preservation2} ensure that every vehicle must depart from the depot and must return to the depot. The constraint \eqref{eq:vrp-flow-conservation} means that if a vehicle arrives at a customer, it must also depart from the same customer. And the final constraint \eqref{eq:vrp-capacity-restriction} implies that the total customer demand on any single vehicle's route must not exceed its capacity. We have omitted the subtour elimination constraints for each vehicle to simplify exposition. 

Note that the constraint \eqref{eq:vrp-linking} is the crucial link that imposes the relation between vehicles. If we dualize such a constraint with Lagrangian multipliers, we can decompose the original complicated problems into many easier-to-solve sub-problems, each corresponding to a vehicle $k \in K$. Concretely, let $f(\boldsymbol{\pi})$, where $\pi \in \bbR^{\abs{V} - 1}$, be the objective function of the Lagrangian relaxation problem corresponding to the multiplier $\pi$, we have
\begin{equation*}
    \begin{aligned}
        f(\pi) &= \sum_{i \in V, j \in V, k \in K}c_{ij} x_{ijk} + \sum_{i \in V \setminus\{0\}}\pi_i\left(\sum_{k \in K} \sum_{j\in V}x_{jik} - 1\right) \\
        &= \sum_{k \in K}\left(\sum_{i \in V, j \in V} c_{ij} x_{ijk} + \sum_{i\in V\setminus \{0\}, j \in V} \pi_i x_{jik}\right) - \sum_{i \in V \setminus \{0\}} \pi_i.
    \end{aligned}
\end{equation*}
Moreover, note that the other constraints (constraints \eqref{eq:vrp-preservation1}, \eqref{eq:vrp-preservation2}, \eqref{eq:vrp-flow-conservation}, \eqref{eq:vrp-capacity-restriction}) are defined for each vehicle $k \in K$. Therefore, given the multiplier $\pi$, we can decompose the original problems into many sub-problems $\boldsymbol{P}_k$, for $k \in K$, as follow
\begin{equation}
    \begin{array}{cl}
        \min & \displaystyle \sum_{i \in V, j \in V}c_{ij} x_{ijk} + \sum_{i\in V\setminus \{0\}, j \in V}\pi_i x_{jik} \\
        \mathrm{s.t.~} &  x_k \in \{0, 1\}^{\abs{V} \times \abs{V}} \\
        &\displaystyle \sum_{j \in V\setminus\{0\}} x_{0jk} = 1, \quad \sum_{i \in V \setminus \{0\}} x_{i0k} = 1\\
        & \displaystyle \sum_{j \in V} x_{jik} - \sum_{j \in V} x_{ijk} = 0 \quad \forall i\in V \setminus \{0\} \\
        & \displaystyle \sum_{i \in V \setminus \{0\}} d_i \left(\sum_{j \in V} x_{jik}\right) \leq Q.
    \end{array}
    \tag{$\boldsymbol{P}_k$}
\end{equation}

\section{Additional Backgrounds} \label{apx:additional-backgrounds}
First, we recall Talagrand's contraction inequality, which is crucial for analyzing the Rademacher complexity of a class of composite functions involving Lipschitz functions. 
\begin{lemma}[Talagrand's contraction inequality, \citet{ledoux1991probability}] \label{lm:talagrand}
    Let $\cU$ be a real-valued function class that take inputs from domain $\cP$, and let $S = \{P_1, \dots, P_N\} \subset \cP$ be a set of $N$ samples. Let $\phi_1, \dots, \phi_N$ be a set of $L$-Lipschitz functions (i.e., $\abs{\phi_i(a) - \phi_i(b)} \leq L \abs{a - b}$ for all $a, b \in \bbR$). Then we have 
    \[
        \bbE_{\sigma}\left[\sup_{u \in \cU} \frac{1}{N}\sum_{i = 1}^N \sigma_i \phi_i(u(P_i))\right] \leq L \cdot \bbE_{\sigma} \left[\sup_{u \in \cU}\frac{1}{N}\sum_{i = 1}^N \sigma_i u(P_i)\right],
    \]
    where $\sigma = (\sigma_1, \dots, \sigma_N)$ are i.i.d.~Rademacher random variables. 
\end{lemma}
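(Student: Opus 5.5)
The plan is to follow the classical coordinate-by-coordinate comparison argument of \citet{ledoux1991probability}. First, divide every $\phi_i$ by $L$: each $\phi_i/L$ is $1$-Lipschitz, and both sides scale linearly in $L$, so it suffices to treat $L = 1$. Also drop the common factor $1/N$. We assume the suprema involved are finite, e.g.\ because $\cU$ is uniformly bounded, as it is for the class in the paper, which takes values in $[-H, H]$. The target is then
\[
\bbE_\sigma \sup_{u \in \cU} \sum_{i=1}^N \sigma_i \phi_i(u(P_i)) \le \bbE_\sigma \sup_{u \in \cU} \sum_{i=1}^N \sigma_i u(P_i).
\]
We will replace $\phi_i(u(P_i))$ by $u(P_i)$ one index at a time. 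Each replacement does not decrease the expectation, and after $N$ replacements we are done.

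The core is a one-step comparison. Fix an arbitrary functional $A : \cU \to \bbR$; it stands for the already-fixed terms in the other coordinates, evaluated at given values of the other Rademacher signs. Let $\phi$ be $1$-Lipschitz and $a(u) = u(P_k)$. We claim
\[
\bbE_{\sigma_k} \sup_u \bigl[A(u) + \sigma_k \phi(a(u))\bigr] \le \bbE_{\sigma_k} \sup_u \bigl[A(u) + \sigma_k a(u)\bigr].
\]
To prove it, average over $\sigma_k = \pm 1$ and merge the two suprema into one. The left side equals
\[
\tfrac12 \sup_{u,u'} \bigl[A(u) + A(u') + \phi(a(u)) - \phi(a(u'))\bigr].
\]
By the Lipschitz property this is at most
\[
\tfrac12 \sup_{u,u'} \bigl[A(u) + A(u') + |a(u) - a(u')|\bigr].
\]
The expression $A(u) + A(u')$ is symmetric under swapping $u$ and $u'$. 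So whenever $a(u) < a(u')$ we may exchange the pair, which means the absolute value can be dropped. This gives
\[
\tfrac12 \sup_{u,u'} \bigl[A(u) + A(u') + a(u) - a(u')\bigr],
\]
which is exactly the right side after un-merging the suprema. To avoid attainment issues, we argue with $\varepsilon$-maximizing pairs $(u, u')$ rather than exact maximizers. This step is where the real content lies. The main obstacle is the symmetrization trick that removes the absolute value: it is what lets us avoid assuming $\phi_i(0) = 0$ and avoid the factor $2$ that appears in the absolute-value version of the inequality.

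Finally, we run the induction. For $k = N, N-1, \dots, 1$, condition on all $\sigma_j$ with $j \neq k$. Set $A(u) = \sum_{j<k} \sigma_j \phi_j(u(P_j)) + \sum_{j>k} \sigma_j u(P_j)$, where the indices $j > k$ have already been converted. Apply the one-step comparison to coordinate $k$, then take expectation over the remaining signs using independence and Fubini. Chaining the $N$ inequalities yields the claim, and multiplying back by $L/N$ gives the stated form of Lemma~\ref{lm:talagrand}.
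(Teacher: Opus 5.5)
Your proof is correct. The paper gives no proof of this lemma; it simply cites Ledoux--Talagrand. Your coordinate-by-coordinate pair-swapping comparison is the standard proof of this non-absolute-value form of the contraction principle, namely the linear case of the Ledoux--Talagrand argument, and it needs neither $\phi_i(0)=0$ nor a factor $2$.

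Your finiteness assumption is harmless in full generality. Since $\sigma$ ranges over a finite set and $\pm 2e_i$ is a sum of two sign vectors, a finite right-hand side already forces $\{(u(P_1),\dots,u(P_N)) : u \in \cU\}$ to be bounded. An infinite right-hand side makes the claim trivial.
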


We recall the vector Bernstein's inequality, a concentration inequality with controlled variance. 
\begin{lemma}[Vector Bernstein's inequality, \citet{bernstein1924modification}] \label{lm:vector-bernstein}
    Let $X_1, \dots, X_N$ be independent zero-mean random vectors in $\mathbb{R}^d$. Suppose that $\|X_i\|_2 \le M$ almost surely for all $i \in \{1, \dots, N\}$. Let $\sigma^2 = \sum_{i=1}^N \mathbb{E}[\|X_i\|_2^2]$ be the total variance. Then for any $t > 0$,$$\mathbb{P}\left( \left\| \sum_{i=1}^N X_i \right\|_2 > t \right) \le (d+1) \exp\left( \frac{-t^2}{2(\sigma^2 + Mt/3)} \right).$$
\end{lemma}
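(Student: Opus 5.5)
The plan is to deduce the inequality from the matrix Bernstein inequality of Tropp (2012) by means of the Hermitian dilation. The dilation is what produces the factor $d+1$: it is the dimension of the dilated matrices.

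\textbf{Step 1 (dilation).} For $x \in \bbR^d$, define the symmetric $(d+1)\times(d+1)$ matrix
\[
\mathscr{H}(x) = \begin{pmatrix} 0 & x^\top \\ x & \mathbf{0}_{d\times d} \end{pmatrix}.
\]
A direct computation shows that its nonzero eigenvalues are $\pm\|x\|_2$. Hence
\[
\lambda_{\max}(\mathscr{H}(x)) = \|\mathscr{H}(x)\|_{\mathrm{op}} = \|x\|_2 .
\]
The map $x \mapsto \mathscr{H}(x)$ is linear. Therefore $Y_i \triangleq \mathscr{H}(X_i)$ are independent, zero-mean, symmetric random matrices with $\lambda_{\max}(Y_i) \le \|X_i\|_2 \le M$ almost surely. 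Moreover,
\[
\lambda_{\max}\Big(\sum_i Y_i\Big) = \Big\|\sum_i X_i\Big\|_2 .
\]
Consequently a one-sided tail bound on $\lambda_{\max}(\sum_i Y_i)$ gives exactly the event in the lemma, so we do not pay the extra factor $2$ of the two-sided version.

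\textbf{Step 2 (variance proxy).} We compute
\[
Y_i^2 = \begin{pmatrix} \|X_i\|_2^2 & 0 \\ 0 & X_iX_i^\top \end{pmatrix}.
\]
This gives
\[
\Big\|\sum_i \bbE Y_i^2\Big\|_{\mathrm{op}} = \max\Big(\sum_i \bbE\|X_i\|_2^2,\ \Big\|\sum_i \bbE X_iX_i^\top\Big\|_{\mathrm{op}}\Big).
\]
For the second entry, since $\|X_iX_i^\top\|_{\mathrm{op}} = \|X_i\|_2^2$, the triangle inequality and Jensen give $\|\sum_i \bbE X_iX_i^\top\|_{\mathrm{op}} \le \sum_i \bbE\|X_i\|_2^2$. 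The maximum therefore equals $\sigma^2$, the total variance in the statement.

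\textbf{Step 3 (matrix Bernstein).} Apply the one-sided matrix Bernstein inequality for independent, zero-mean, symmetric $n\times n$ matrices. If $\lambda_{\max}(Y_i)\le M$ and the variance proxy is $v$, it states
\[
\bbP\Big(\lambda_{\max}\big(\textstyle\sum_i Y_i\big) \ge t\Big) \le n \exp\!\Big(\frac{-t^2/2}{v + Mt/3}\Big).
\]
Substituting $n = d+1$ and $v = \sigma^2$ yields the claim immediately.

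\textbf{Main obstacle.} The substantive ingredient is matrix Bernstein itself. If it must be proved rather than cited, I would use the matrix Laplace transform method in three parts:
\begin{itemize}
\item Bound $\bbP(\lambda_{\max}(\sum_i Y_i)\ge t) \le e^{-\theta t}\,\bbE\,\mathrm{tr}\exp(\theta\sum_i Y_i)$.
\item Use Lieb's concavity theorem to iterate, giving $\bbE\,\mathrm{tr}\exp(\theta\sum_i Y_i) \le \mathrm{tr}\exp\big(\sum_i \log\bbE e^{\theta Y_i}\big)$. This is the genuinely nontrivial step.
\item Bound each cumulant via the scalar inequality $e^{\theta y} \le 1 + \theta y + \frac{\theta^2/2}{1-\theta M/3}\,y^2$ for $y \le M$ and $0<\theta<3/M$, transferred to matrices through the transfer rule, together with $\log(I+A)\preceq A$.
\end{itemize}
These give $\mathrm{tr}\exp(\cdot) \le (d+1)\exp\big(\frac{\theta^2\sigma^2/2}{1-\theta M/3}\big)$, and choosing $\theta = t/(\sigma^2 + Mt/3)$ finishes.
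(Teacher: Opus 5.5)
Your proof is correct, and there is nothing in the paper to compare it against. The paper lists this lemma only as cited background in the appendix, attributed to Bernstein (1924), gives no proof, and never invokes it in any of its arguments.

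Your route is the standard derivation of exactly this form of the inequality. The form has dimensional factor $d+1$, variance $\sum_i \mathbb{E}\|X_i\|_2^2$, and no extra factor $2$. You embed each $X_i$ into the $(d+1)\times(d+1)$ symmetric dilation and apply the one-sided matrix Bernstein inequality. This is Tropp's (2012) rectangular matrix Bernstein specialized to $d\times 1$ matrices, so Tropp, not Bernstein's scalar inequality, is the real source of the statement.

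The key checks all go through:
\begin{itemize}
\item The dilation has eigenvalues $\pm\|x\|_2$ and $0$ (with multiplicity $d-1$). So $\lambda_{\max}$ equals the Euclidean norm, and the one-sided bound suffices.
\item The variance proxy is exactly $\sigma^2$. The scalar block $\sum_i\mathbb{E}\|X_i\|_2^2$ dominates $\|\sum_i \mathbb{E} X_iX_i^\top\|_{\mathrm{op}}$.
\item The choice $\theta = t/(\sigma^2+Mt/3)$ yields precisely the stated exponent.
\end{itemize}

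If you prove matrix Bernstein from scratch rather than cite it, make two steps explicit. Write $g(\theta) = \frac{\theta^2/2}{1-\theta M/3}$.
\begin{itemize}
\item Passing from $\mathbb{E}e^{\theta Y_i}\preceq I+g(\theta)\,\mathbb{E}Y_i^2$ to $\log\mathbb{E}e^{\theta Y_i}\preceq g(\theta)\,\mathbb{E}Y_i^2$ requires operator monotonicity of the logarithm. The transfer rule only compares functions of one and the same matrix.
\item The last step uses that $A\mapsto \mathrm{tr}\exp(A)$ is monotone under $\preceq$, together with $\mathrm{tr}\exp(A)\le (d+1)e^{\lambda_{\max}(A)}$.
\end{itemize}

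Finally, the degenerate case $\sigma^2=0$ is trivial: there your choice of $\theta$ makes $1-\theta M/3$ vanish, but then $\sum_i X_i=0$ almost surely.
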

We then recall Popoviciu's inequality, a preliminary result that provides an upper bound for the variance of random variables with bounded support. 

\begin{lemma}[Popoviciu's inequality, \citet{popoviciu1965certaines}] \label{lm:popoviciu}
    Let $X$ be a random variable such that it is bounded by a minimum value $m$ and a maximum value $M$. Then $\textup{Var}(X) \leq \frac{(M - m)^2}{4}$.
\end{lemma}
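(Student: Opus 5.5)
The plan is to use the variational characterization of the variance: the mean is the constant that best approximates $X$ in mean square, so evaluating the squared deviation at any other convenient constant gives an upper bound. We take that constant to be the midpoint of the support interval $[m, M]$. Throughout, $X$ is assumed to satisfy $m \le X \le M$ almost surely, so $X$ is bounded and all moments below are finite.

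\textbf{Step 1 (variational identity).} For any constant $a \in \mathbb{R}$, expanding the square around $\mathbb{E}[X]$ gives
\[
\mathbb{E}[(X-a)^2] = \mathrm{Var}(X) + (\mathbb{E}[X]-a)^2 .
\]
The cross term vanishes because $\mathbb{E}[X - \mathbb{E}[X]] = 0$. It follows that $\mathrm{Var}(X) \le \mathbb{E}[(X-a)^2]$ for every $a$.

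\textbf{Step 2 (choice of the constant).} Set $a = \frac{m+M}{2}$. Since $m \le X \le M$ almost surely,
\[
\left|X - \frac{m+M}{2}\right| \le \frac{M-m}{2} \quad \text{almost surely}.
\]
Squaring this bound and taking expectations yields $\mathbb{E}\big[(X - \tfrac{m+M}{2})^2\big] \le \frac{(M-m)^2}{4}$.

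\textbf{Step 3 (conclusion).} Combining Steps 1 and 2 gives
\[
\mathrm{Var}(X) \le \mathbb{E}\Big[\Big(X-\tfrac{m+M}{2}\Big)^2\Big] \le \frac{(M-m)^2}{4},
\]
which is the claimed bound.

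There is no real obstacle in this argument. The only points requiring care are that the cross term in Step 1 actually vanishes and that the pointwise bound in Step 2 holds almost surely. As a sanity check, equality is attained when $X$ puts mass $\tfrac12$ on each endpoint $m$ and $M$, so the constant $\tfrac14$ cannot be improved.
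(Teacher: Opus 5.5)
Your proof is correct and complete. The identity $\mathbb{E}[(X-a)^2] = \mathrm{Var}(X) + (\mathbb{E}[X]-a)^2$, combined with the almost-sure bound $|X - \tfrac{m+M}{2}| \le \tfrac{M-m}{2}$, gives exactly $\mathrm{Var}(X) \le (M-m)^2/4$. Your two-point example also shows the constant is sharp.

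The paper gives no proof of this lemma. It quotes the result as background with a citation to Popoviciu and uses it only coordinatewise, for $Z_{i,k} \in [0, \pi_{\max}]$ in the proof of Theorem~\ref{thm:upper-bound-warm-start}. Your argument therefore supplies the standard self-contained justification rather than competing with a proof in the paper.

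An equally short alternative is to take expectations in $(M-X)(X-m) \ge 0$. This gives the sharper intermediate bound $\mathrm{Var}(X) \le (M-\mathbb{E}[X])(\mathbb{E}[X]-m)$ (the Bhatia--Davis inequality), after which AM--GM yields $(M-m)^2/4$. Your midpoint argument is slightly more direct, while that route gives a refinement that depends on the mean.
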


To establish an upper bound on the generalization guarantee of data-driven learning Lagrangian multipliers, we use several tools to control the covering number and Rademacher complexity. For completeness, we provide their formal definitions and statements as follows.
\begin{definition}[Empirical $L^2(\cP^N)$-metric, \citet{wainwright2019high}] \label{def:empirical-l2-pn-metric}
    Let $\cU$ be a function class of which each function takes input in $\cP$. Let $S = \{P_1, \dots, P_N\} \subset \cP$ be a set of $N$ problem instances. We define the \textit{empirical $L^2(\cP^N)$-metric} on $\cU$ as 
    \[
        \|u - u'\|_{2, N} \coloneqq \sqrt{\frac{1}{N}\sum_{i = 1}^N (u(P_i) - u'(P_i))^2}.
    \]
\end{definition}

\begin{definition}[Covering set and covering number, \citet{wainwright2019high}]
    Let $(X, \|\cdot\|)$ be a metric space, and let $\Pi \subset X$ be a subset of $X$. We say that $\Pi' \subset \Pi$ is a $\delta$-covering set of $\Pi$ if for any $\pi \in \Pi$, there exists $\pi' \in \Pi'$ such that $\|\pi - \pi'\| \leq \delta$. A minimum $\delta$-covering set of $\Pi$ is the $\delta$-covering set of $\Pi$ that has the fewest elements. The $\delta$-covering number of $\Pi$, denote $\cN(\delta, \Pi, \|\cdot\|)$ is the number of elements of a minimum $\delta$-covering set. 
\end{definition}

\begin{lemma}[Dudley's entropic integral, \citet{wainwright2019high}] \label{lm:dudley} Let $D = \sup_{u}\|u\|_{2, N}$, we have
    \[
    \hat{\mathscr{R}}_N(\cU) \leq \inf_{\delta_0 > 0} \delta_0 + \int_{\delta_0}^{D} \sqrt{\frac{\log \cN(\delta, \cU, \|\cdot\|_{2, N})}{N}} \mathrm{d}\delta.
    \]
\end{lemma}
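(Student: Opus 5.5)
The plan is a chaining argument over a sequence of successively finer covers of $\cU$ in the empirical metric $\|\cdot\|_{2,N}$. Fix $S=\{P_1,\dots,P_N\}$ and $\delta_0>0$, and identify each $u\in\cU$ with the vector $(u(P_1),\dots,u(P_N))\in\bbR^N$. Then $\hat{\mathscr{R}}_S(\cU)=\frac1N\bbE_\sigma\sup_u\langle\sigma,u\rangle$, and $\|\cdot\|_{2,N}$ is $N^{-1/2}$ times the Euclidean norm.

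\textbf{Scales and links.} I take scales $\varepsilon_j=D\,r^{j}$ for $j=0,1,\dots,J$, with a ratio $r\in(0,1)$ that I keep free and tune at the end. Here $J$ is the first index with $\varepsilon_J\le\delta_0$. For each $j$ I pick a minimal $\varepsilon_j$-cover $T_j$, with $|T_j|=\cN(\varepsilon_j,\cU,\|\cdot\|_{2,N})$; since $D=\sup_u\|u\|_{2,N}$, I may take $T_0=\{0\}$. For each $u$ let $\pi_j(u)\in T_j$ be a nearest point, and telescope
\[
u=\bigl(u-\pi_J(u)\bigr)+\sum_{j=1}^{J}\bigl(\pi_j(u)-\pi_{j-1}(u)\bigr).
\]
The residual contributes at most $\frac1N\bbE\sup_u\langle\sigma,u-\pi_J(u)\rangle\le\|u-\pi_J(u)\|_{2,N}\le\varepsilon_J\le\delta_0$ by Cauchy--Schwarz. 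This already gives the leading $\delta_0$ term with constant exactly $1$.

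\textbf{Link terms.} At level $j$ the link $\pi_j(u)-\pi_{j-1}(u)$ ranges over at most $|T_j||T_{j-1}|\le|T_j|^2$ vectors, each with $\|\cdot\|_{2,N}\le\varepsilon_j+\varepsilon_{j-1}$. Massart's finite-class lemma therefore bounds level $j$ by
\[
(\varepsilon_j+\varepsilon_{j-1})\sqrt{\frac{4\log|T_j|}{N}}.
\]
I then compare this with $\int_{\varepsilon_{j+1}}^{\varepsilon_j}\sqrt{\log\cN(\delta)/N}\,\mathrm d\delta\ge(\varepsilon_j-\varepsilon_{j+1})\sqrt{\log|T_j|/N}$, using that $\cN$ is nonincreasing in $\delta$. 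Summing over $j$ yields a bound of the form $\delta_0+C(r)\int_{\delta_0}^{D}\sqrt{\log\cN/N}\,\mathrm d\delta$, with $C(r)=2(1+r^{-1})/(1-r)$. Taking the infimum over $\delta_0$ gives the displayed form.

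\textbf{Main obstacle.} The crux is making $C(r)$ equal to $1$, as the statement literally asserts. The naive optimization gives $C(r)\ge 8$, so the sharpening has to come from the link terms. First, I would refine Massart using the sub-Gaussian maximal inequality for $\sup$ over the pairs, avoiding the crude $|T_j|^2$ count by choosing the $T_{j-1}$ nested inside $T_j$-projections. Second, I would use a finer (non-geometric, adaptive) choice of scales so that each link length matches the integration window $\varepsilon_j-\varepsilon_{j+1}$.

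If exact constant $1$ on the integral cannot be reached this way, the integral term is the step where a universal constant enters; the $\delta_0$ term and lower limit $\delta_0$ remain as stated. This is the only point I expect to need care, since the paper's application to Lemma~\ref{lm:empirical-rademacher-complexity-upper-bound} is written with $\lesssim$ and only uses the bound up to such a factor.
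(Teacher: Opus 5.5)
Your chaining skeleton is the standard one. The paper gives no proof of its own, only the citation to Wainwright. But the step you call the crux cannot be carried out: the displayed inequality, with constant $1$ in front of the entropy integral, is false. No sharpening of Massart, nesting of covers, or adaptive choice of scales will bring $C(r)$ down to $1$.

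For a counterexample, take $\cU$ to be all functions $S\to\{-D,D\}$. Then $\sup_u\|u\|_{2,N}=D$, and choosing $u(P_i)=D\sigma_i$ gives $\hat{\mathscr{R}}_S(\cU)=D$. On the other hand $\cN(\delta,\cU,\|\cdot\|_{2,N})\le|\cU|=2^N$ for every $\delta>0$. So the right-hand side is at most $\inf_{\delta_0\in(0,D)}\{\delta_0+(D-\delta_0)\sqrt{\log 2}\}=D\sqrt{\log 2}\approx 0.83\,D<D$. This uses the natural logarithm, as elsewhere in the paper, and already $N=1$, $\cU=\{-D,D\}$ works.

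The cited result carries universal constants, which the statement has dropped. The correct target is
\[
\hat{\mathscr{R}}_S(\cU)\le\inf_{\alpha>0}\Bigl\{c_1\alpha+c_2\int_\alpha^D\sqrt{\log\cN(\delta,\cU,\|\cdot\|_{2,N})/N}\,\mathrm{d}\delta\Bigr\}.
\]
This is all Lemma~\ref{lm:empirical-rademacher-complexity-upper-bound} needs, since it is stated with $\lesssim$ and $\cO(\cdot)$.

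Your argument proves that corrected statement after one bookkeeping fix. You take $J$ to be the first index with $\varepsilon_J\le\delta_0$. Then the level-$J$ link carries $\log\cN(\varepsilon_J)$ at a scale below $\delta_0$, and your comparison windows $[\varepsilon_{j+1},\varepsilon_j]$ extend down to $\varepsilon_{J+1}\le r\delta_0$. So you actually get $\int_{\varepsilon_{J+1}}^{D}$, not $\int_{\delta_0}^{D}$. Your claim of coefficient $1$ on $\delta_0$ together with lower limit $\delta_0$ therefore does not follow.

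Instead, stop at the largest $J$ with $\varepsilon_{J+1}\ge\alpha$; if $\alpha>D/2$, just use the trivial bound $\hat{\mathscr{R}}_S(\cU)\le D<2\alpha$. Every window then lies in $[\alpha,D]$, and the residual is at most $\varepsilon_J<\alpha/r^2$. With $r=1/2$, your Massart estimate gives at each level
\[
3\varepsilon_j\sqrt{\frac{4\log\cN(\varepsilon_j)}{N}}=12\,(\varepsilon_j-\varepsilon_{j+1})\sqrt{\frac{\log\cN(\varepsilon_j)}{N}}\le 12\int_{\varepsilon_{j+1}}^{\varepsilon_j}\sqrt{\frac{\log\cN(\delta)}{N}}\,\mathrm{d}\delta .
\]
Summing yields exactly the familiar bound $4\alpha+12\int_\alpha^D\sqrt{\log\cN(\delta)/N}\,\mathrm{d}\delta$. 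Present that as the lemma and drop the attempt to reach constant $1$.
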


\section{Proofs for Section \ref{sec:standard-lagrangian-relaxation}} \label{apx:standard-lagrangian-relaxation}

In this section, we present a detailed proof of the minimax lower bound for the data-driven learning Lagrangian relaxation problem.

\subsection{Additional Backgrounds}

First, we recall a classical result for constructing the packing set for an $s$-dimensional hypercube.

\begin{lemma}[Varshamov-Gilbert, Lemma 2.9, \cite{tsybakov2008nonparametric}] \label{lm:Varshamov-Gilbert-bound}
    Let $s \geq 8$. There exists a subset $\mathcal{M} = \{v^{(1)}, \dots, v^{(M)}\} \subset \{0, 1\}^s$ such that:
    \begin{itemize}
        \item $M \geq 2^{s / 8}$.
        \item $v^{(1)} = (0, \dots, 0) \in \{0, 1\}^s$.
        \item For any distinct pair $v^{(i)}, v^{(j)} \in \mathcal{M}$, the Hamming distance $d_H(v^{(i)}, v^{(j)}) \geq \frac{s}{8}$,
        \item Consequently, the $\ell_1$ distance between any distinct pair satisfies $\|v^{(i)} - v^{(j)}\|_1 \geq \frac{s}{8}$.
    \end{itemize}
\end{lemma}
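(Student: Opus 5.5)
The plan is a greedy (maximal packing) construction combined with a volume bound on Hamming balls. I will build $\mathcal M$ greedily: start from $v^{(1)} = (0,\dots,0)$, and repeatedly add any vector of $\{0,1\}^s$ whose Hamming distance to every vector already chosen is at least $s/8$. The process stops when no such vector remains, which happens since the cube is finite. By construction the second and third bullets hold: the zero vector is included and all pairwise Hamming distances are $\ge s/8$. The fourth bullet is then immediate, because for binary vectors $\|v^{(i)} - v^{(j)}\|_1 = \sum_k |v^{(i)}_k - v^{(j)}_k| = d_H(v^{(i)}, v^{(j)})$.

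The substantive step is the cardinality bound $M \ge 2^{s/8}$. By maximality, every $w \in \{0,1\}^s$ lies at Hamming distance strictly less than $s/8$ from some chosen $v^{(i)}$, so at distance at most $r \triangleq \lceil s/8\rceil - 1 \le s/8$. Hence the Hamming balls of radius $r$ around the $M$ chosen points cover the cube, and
\[
M \cdot \sum_{j=0}^{\lfloor s/8\rfloor} \binom{s}{j} \;\ge\; 2^s .
\]
It remains to upper-bound the ball volume. I will use the standard entropy estimate $\sum_{j \le \alpha s}\binom{s}{j} \le 2^{s h(\alpha)}$ for $\alpha \le 1/2$, where $h$ is the binary entropy. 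This follows from
\[
1 = \sum_{j}\binom{s}{j}\alpha^j(1-\alpha)^{s-j} \;\ge\; \alpha^{\alpha s}(1-\alpha)^{(1-\alpha)s}\sum_{j\le \alpha s}\binom{s}{j},
\]
using that $\alpha^j(1-\alpha)^{s-j}$ is nonincreasing in $j$ when $\alpha \le 1/2$. With $\alpha = 1/8$ we get $h(1/8) \approx 0.544$. Therefore $M \ge 2^{s(1-h(1/8))} \ge 2^{0.45 s} \ge 2^{s/8}$.

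The only delicate point I expect is the bookkeeping between "distance $\ge s/8$" in the packing and "radius $< s/8$" in the covering. This is handled by the strict inequality from maximality together with taking the ball radius $\lfloor s/8\rfloor$, which only enlarges the ball and so keeps the volume bound valid. The hypothesis $s \ge 8$ merely guarantees $s/8 \ge 1$, so that the separation requirement is nontrivial; the counting argument itself does not need it.
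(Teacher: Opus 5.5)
Your proof is correct and follows the standard argument for this result. The paper gives no proof of its own and simply cites \cite{tsybakov2008nonparametric}, where the set is also built greedily from the zero vector and its size is bounded by covering $\{0,1\}^s$ with Hamming balls of radius $\lfloor s/8\rfloor$; the only difference is that Tsybakov bounds the ball volume with Hoeffding's inequality, giving at least $e^{9s/32}\approx 2^{0.41 s}$ points, whereas your entropy estimate gives the slightly sharper $2^{(1-h(1/8))s}\approx 2^{0.456 s}$, and either comfortably exceeds $2^{s/8}$.
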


\subsection{Proof of the Geometric Properties} \label{sec:app-geometric}

\proof[Proof of Proposition~\ref{prop:geometric-property}]
    We first prove the concavity property. Recall that for $P = (c, A, b, C, d)$, the optimal value of the corresponding Lagrangian relaxation $u(\pi, P)$ is defined as
    \[
        u(\pi, P) = \ \min_{x \in \cX} L(\pi, x),
    \]
    where $L(\pi, x) = c^\top x + \pi^\top (b - Ax)$ is the Lagrangian function 
    and $\cX = \{x \in \bbR^m_+ \times \{0, 1\}^p \mid Cx \geq d\}$. Note that $L(\pi, x)$ is an affine function of $\pi$ and therefore also concave in $\pi$. This means that $u(\pi, P)$ is the point-wise minimum of concave functions of $\pi$, hence it is also concave. 
    
    Next, we prove that $g(\pi, P) = b - Ax^*(\pi, P)$ is a valid subgradient of $u(\pi, P)$. First, note that $u(\cdot, P)$ is a concave function of $\pi$. Therefore, a vector $g$ is a valid subgradient of $u$ at $\pi$ if for any multiplier $\pi' \in \bbR^s_+$, we have
    \[
        u(\pi', P) \leq u(\pi, P) + g^\top (\pi' - \pi). 
    \]
    Now let $x^*(\pi, P)$ be the optimal solution that achieves the minimum for $u(\pi, P)$. By definition, we have $u(\pi, P) = c^\top x^*(\pi, P) + \pi^\top (b - Ax^*(\pi, P))$. Now, consider any $\pi' \in \bbR^s_+$, by the definition of $u(\pi', P)$, we have
    \begin{equation*}
        \begin{aligned}
            u(\pi', P) &= \min_{x \in \mathcal X} ~c^\top x + (\pi')^\top (b - Ax) \\
            &\leq c^\top x^*(\pi, P) + (\pi')^\top (b - Ax^*(\pi, P)) \\
            &= \left[c^\top x^*(\pi, P) + \pi^\top (b - Ax^*(\pi, P))\right]  + (\pi' - \pi)^\top (b - Ax^*(x, P)) \\
            &= u(\pi, P) + g(\pi, P)^\top (\pi' - \pi).
        \end{aligned}
    \end{equation*}
    This means that given a problem instance $P$, $g(\pi, P) = b - Ax^*(\pi, P)$ is a valid sub-gradient of $u(\pi, P)$ for any $\pi \in \bbR^s_+$. Finally, from Assumption~\ref{asmp:bounded-constaint-violation}, we have 
    \begin{equation*}
            \|g(\pi, P)\|_2 = \|b - Ax^*(\pi, P)\|_2 \leq \|b\|_2 + \|Ax^*(\pi, P)\|_2 \leq 2B\sqrt{s}.
    \end{equation*}
    The proof is complete.
\qed

\subsection{Minimax Lower-bound for Data-driven Learning Lagrangian Relaxation} \label{sec:app-minimax}

We now present the detailed proof of the minimax lower bound for the data-driven learning Lagrangian relaxation problem. 

\proof[Proof of Theorem~\ref{thm:minimax-lower-bound}]
    We break down the proof into the following steps. 
        
    \textbf{Step 1 - Restricting the problem space:} First, throughout this construction, we consider the problem instance $P$  which is strictly in the form $P = (c, \mathbf{I}_s, \frac{1}{2}\cdot \mathbf{1}_s, 0_{t \times s}, 0_t)$ representing the ILP problem
    \[
        \min_{x \in \{0, 1\}^s} c^\top x \quad \textup{s.t.} \quad x_k \geq \frac{1}{2} \quad \forall k = 1, \dots, s.
    \]
     Here, $(C, d) = (0_{t \times s}, 0_t)$ means all constraints are hard constraints, i.e., they will all be dualized in the Lagrangian relaxation problem. Besides, the problem instance $P$ is solely determined by the objective vector $c$, meaning that we can equate the problem distribution $\cD$ for $P$ as a distribution for $c$. Besides, for any $\pi \in \bbR^s_+$, we have
     \begin{equation*}
         \begin{aligned}
             u(\pi, P) &= \min_{x \in \{0, 1\}^s} c^\top x + \pi^\top (\frac{1}{2}\cdot \mathbf{1}_s - x)  = \frac{1}{2}\pi^\top \mathbf{1}_s + \min_{x \in \{0, 1\}^s}(c - \pi)^\top x  \\
            &= \frac{1}{2}\pi^\top \mathbf{1}_s + \sum_{k = 1}^s \min(0, c_k - \pi_k) = \sum_{k = 1}^s \min \left(\frac{1}{2}\pi_k, c_k - \frac{\pi_k}{2} \right).
         \end{aligned}
     \end{equation*}

    \textbf{Step 2 - Defining problem distribution:} Let $\mu > 0$ be a based constant and $\sigma > 0$ be a scale parameter, where $\mu+\sigma < \pi_{\max}$. For a fixed vector $v \in \{0, 1\}^s$, we define the problem distribution $\cD_v$ over $c$ as follow:
    \begin{itemize}
        \item Support: $c_k$ takes value in $\{\mu, \mu + \sigma\}$ for any $k = 1, \dots, s$,
        \item Probabilities: 
        \begin{itemize}
            \item If $v_k = 1$, then $\bbP(c_k = \mu + \sigma) = \frac{1 + \epsilon}{2} = p_k$, and $\bbP(c_k = \mu) = \frac{1 - \epsilon}{2} = 1 - p_k$, 
            \item If $v_k = 0$, then $\bbP(c_k = \mu + \sigma) = \frac{1 - \epsilon}{2} = p_k$, and $\bbP(c_k = \mu) = \frac{1 + \epsilon}{2} = 1 - p_k$.
        \end{itemize}
    \end{itemize}
    Here, $\epsilon > 0$ is a tunable parameter. Note that with the problem distribution $\cD_v$, the optimal Lagrangian relaxation $\pi^*(\cD_v)$ is
    \begin{equation*}
        \begin{aligned}
            \pi^*(\cD_v) = \arg \max_{\pi \in \Pi} \bbE_{c \sim \cD_v} \Big[\sum_{k = 1}^s \min\left(\frac{1}{2}\pi_k, c_k - \frac{\pi_k}{2}\right) \Big].
        \end{aligned}
    \end{equation*}
    Note that this optimization problem can be decomposed into $s$ sub-problems, i.e., $(\pi^*(\cD_v))_k$ can be defined as
    \begin{equation*}
        \begin{aligned}
            (\pi^*(\cD_v))_k &= \arg \max_{\pi_k} \left\{ \mathcal J_k(\pi_k) \triangleq \bbE_{c_k \sim \cD_{v_k}} \Big[ \min\left(\frac{1}{2}\pi_k, c_k - \frac{\pi_k}{2}\right)\Big] \right\}
        \end{aligned}
    \end{equation*}
    We then have the following cases
    \begin{itemize}
        \item Case 1: $v_k = 1$, then $p_k > \frac{1}{2}$. We have
        \begin{equation*}
            \begin{aligned}
                \mathcal J_k(\pi_k) &= p_k \cdot\min\left(\frac{1}{2}\pi_k, \mu + \sigma - \frac{\pi_k}{2}\right) + (1 - p_k)\cdot \min\left(\frac{1}{2}\pi_k, \mu - \frac{\pi_k}{2}\right),
            \end{aligned}
        \end{equation*}
        and we consider the following cases:
        \begin{itemize}
            \item Case 1.1: $\pi_k < \mu$, then $\mathcal J_k(\pi_k) = \frac{\pi_k}{2}$ and $\frac{\partial \mathcal J_k}{\partial \pi_k} = \frac{1}{2}$. This means $\mathcal J_k$ is increasing over $[0, \mu]$.
            \item Case 1.2: $\pi_k > \mu + \sigma$, then $\mathcal J_k(\pi_k) = p_k\left(\mu + \sigma - \frac{\pi_k}{2}\right) + (1 - p_k)\left(\mu -  \frac{\pi_k}{2}\right)$, and $\frac{\partial \mathcal J_k}{\partial \pi_k} = -\frac{1}{2}$. This means $\mathcal J_k$ is decreasing over $[\mu + \sigma, \infty)$.
            \item Case 1.3: $\pi_k \in [\mu, \mu + \sigma]$, then $\mathcal J_k(\pi_k) = p_k\cdot \frac{\pi_k}{2} + (1 - p_k)(\mu - \frac{\pi_k}{2})$ and $\frac{\partial \mathcal J_k}{\partial \pi_k} = p_k - \frac{1}{2}$. Since $p_k > \frac{1}{2}$, this means $\mathcal J_k$ is increasing over $[\mu, \mu + \sigma]$.
        \end{itemize}
        Therefore, $\mathcal J_k$ attains its maxima at $\pi_k = \mu + \sigma$, i.e., $(\pi^*(\cD_v))_k = \mu + \sigma$.

        \item Case 2: $v_k = 0$, then $p_k < \frac{1}{2}$. Using the same idea as the previous case, we have $(\pi^*(\cD_v))_k = \mu$.
    \end{itemize}
    Then, we conclude that 
    \begin{equation*}
        \begin{aligned}
            \pi^*(\cD_v) &= \arg\max_{\pi \in \Pi}\bbE_{c \sim \cD_v} \sum_{k = 1}^s \min\left(\frac{1}{2}\pi_k, c_k - \frac{\pi_k}{2}\right) = \mu \mathbf{1}_s + \sigma v.
        \end{aligned}
    \end{equation*}

    \textbf{Step 3 - Defining a family of hard problem distributions}: Let $\{0, 1\}^s$ be an $s$-dimensional hypercube. From Lemma \ref{lm:Varshamov-Gilbert-bound}, there exists a subset $\cV = \{v^{(1)}, \dots, v^{(M)}\} \subset \{0, 1\}^s$ such that $M \geq 2^{s/8}$ and $\|v^{(i)} - v^{(j)}\|_1 \geq \frac{s}{8}$. This means that for any distinct $v^{(i)}, v^{(j)} \in \cV$, we have
    \begin{equation*}
        \begin{aligned}
            \|\pi^*(\cD_{v^{(i)}}) - \pi^*(\cD_{v^{(j)}})\|_1 &= \|\mu \mathbf{1}_s + \sigma v^{(i)} - \mu \mathbf{1}_s - \sigma v^{(j)}\|_1 = \sigma \|v^{(i)} - v^{(j)} \|_1 \geq \frac{\sigma s}{8}.
        \end{aligned}
    \end{equation*}
    This means that the set $\{\pi^*(\cD_{v^{(1)}}), \dots, \pi^*(\cD_{v^{(M)}})\}$ is a $\frac{\sigma s}{16}$-packing set of $\Pi$. Besides, the set $\cV$ defines a family of hard problem distribution $\{\cD_{v^{(1)}}, \dots, \cD_{v^{(M)}}\}$, and the minimax risk is lower-bounded by
    \begin{equation*}
        \begin{aligned}
            &\inf_{\pi}\sup_{\cD \in \Delta(\mathcal{P})} \bbE_{S \sim \cD^N}[U(\pi^*(\cD), \cD) - U(\pi(S), \cD)] 
            \geq &\inf_{\pi} \sup_{v \in \cV} \bbE_{S \sim \cD_v^N}[U(\pi^*(\cD_v), \cD_v) - U(\pi(S), \cD_v)].
        \end{aligned}
    \end{equation*}

    \textbf{Step 4 - Linking the risk with the distance in the parameter space}: We now show that for any $\pi_k$, we have
    \[
        \mathcal J_k((\pi^*(\cD_v))_k) - \mathcal J_k(\pi_k) \geq \frac{\epsilon}{2}\abs{(\pi^*(\cD_v))_k - \pi_k}.
    \]
    If $v_k = 1$, we consider the following case:
    \begin{itemize}
        \item If $\pi_k < \mu$: We have
        \begin{equation*}
            \begin{aligned}
                \mathcal J_k((\pi^*(\cD_v))_k) - \mathcal J_k(\pi_k) 
                = & \mathcal J_k((\pi^*(\cD_v))_k) - \mathcal J_k(\mu) + \mathcal J_k(\mu) - \mathcal J_k(\pi_k) \\
                = &\frac{\epsilon}{2}((\pi^*(\cD_v))_k - \mu) + \frac{1}{2}(\mu - \pi_k) \\
                \geq &\frac{\epsilon}{2}((\pi^*(\cD_v))_k - \mu) + \frac{\epsilon}{2}(\mu - \pi_k) = \frac{\epsilon}{2}\left((\pi^*(\cD_v))_k - \pi_k\right).
            \end{aligned}
        \end{equation*}
        \item If $\pi_k > \mu + \sigma$: We have
        \begin{equation*}
            \begin{aligned}
                \mathcal J_k((\pi^*(\cD_v))_k) - \mathcal J_k(\pi_k)
                = &p_k\cdot \frac{\mu + \sigma}{2} + (1 - p_k)\left(\mu - \frac{\mu + \sigma}{2}\right) 
                - p_k\left(\mu + \sigma - \frac{\pi_k}{2}\right) - (1 - p_k)\left(\mu - \frac{\pi_k}{2}\right) \\
                = &p_k\left(- \frac{\mu + \sigma}{2} + \frac{\pi_k}{2}\right) + (1 - p_k) \left(\frac{\pi_k}{2} - \frac{\mu + \sigma}{2}\right) \\
                = &\frac{\pi_k}{2} - \frac{\mu + \sigma}{2} \\
                = &\frac{1}{2}(\pi_k - (\pi^*(\cD_v))_k) \qquad  \qquad
                 (\textup{as } (\pi^*(\cD_v))_k = \mu + \sigma) \\
                \geq &\frac{\epsilon}{2}(\pi_k - (\pi^*(\cD_v))_k) \qquad  \qquad (\textup{as $\epsilon$ is small}).
            \end{aligned}
        \end{equation*}
        \item If $\pi_k \in [\mu, \mu + \sigma]$: From the above, in such a case, we have $\frac{\partial \mathcal J_k}{\partial \pi_k} = p_k - \frac{1}{2} = \frac{\epsilon}{2}$ for any $\pi_k \in [\mu, \sigma]$. And since $\mathcal J_k$ is linear over $[\mu, \mu + \sigma]$ and $(\pi^*(\cD_v))_k = \mu + \sigma$, we claim that $\mathcal J_k((\pi^*(\cD_v))_k) - \mathcal J(\pi_k) = \frac{\epsilon}{2}\abs{(\pi^*(\cD_v))_k - \pi_k}$.
    \end{itemize}
    Therefore, if $v_k = 1$, we can claim that $\mathcal J_k((\pi^*(\cD_v))_k) - \mathcal J_k(\pi_k) \geq \frac{\epsilon}{2}\abs{(\pi^*(\cD_v))_k - \pi_k}$. Using an analogous argument, we can claim that for $v_k = 0$, we also have $\mathcal J_k((\pi^*(\cD_v))_k) - \mathcal J_k(\pi_k) \geq \frac{\epsilon}{2}\abs{(\pi^*(\cD_v))_k - \pi_k}$. Take the sum across the index $k = 1, \dots, s$, and note that $P$ is determined by only the objective vector $c$ (hence we can write $c \sim \cD_v$ instead of $P \sim \cD_v$), we have
    \begin{equation*}
        \begin{aligned}
        U(\pi^*(\cD_v), \cD_v) - U(\hat{\pi}, \cD_v)
            = &\bbE_{c \sim \cD_v}\left[u(\pi^*(\cD_v)) - u(\pi, P)\right] \\
            = &\sum_{k = 1}^s \bbE_{c \sim \cD_v} [\mathcal J_k((\pi^*(\cD_v))_k) - \mathcal J_k(\pi_k)] \ge \frac{\epsilon}{2}\|(\pi^*(\cD_v) - \pi\|_1.
        \end{aligned}
    \end{equation*}
    Combining with the claim from Step 3, we have
    \begin{equation*}
        \begin{aligned}
            \inf_{\pi}\sup_{\cD \in \Delta(\mathcal{P})} \bbE_{S \sim \cD^N}[U(\pi^*(\cD), \cD) - U(\pi(S), \cD)]
            \geq \frac{\epsilon}{2}\inf_{\pi} \sup_{v \in \cV} \bbE_{S \sim \cD_v^N}\|\pi^*(\cD_v) - \pi(S)\|_1.
        \end{aligned}
    \end{equation*}
    
    \textbf{Step 5: Applying Fano's method (Theorem \ref{thm:fano}):} We now use Fano's method to give a lower bound for the LHS above. We have 
    \begin{itemize}
        \item From Lemma \ref{lm:Varshamov-Gilbert-bound}, the log cardinality of the packing set $\mathcal V$ is lower-bounded by $\log M \geq \frac{s}{8}\log 2$.
        \item \textit{Mutual information upper-bound}: In our construction, the coordinate of objective vector $c$ are independent, the KL divergence sums over $s$ dimensions. For any distinct $v^{(i)}, v^{(j)} \in \mathcal V$, the KL divergence between $\cD_{v^{(i)}}$ and $\cD_{v^{(j)}}$ is upper-bounded by the sum of $s$ Bernoulli KL divergences. {For small $\epsilon \in (0, 1/2)$, using standard $\chi^2$-upper bound for KL divergence, we have $\textup{KL}(\textup{Ber}(\frac{1 + \epsilon}{2}) \| \textup{Ber}(\frac{1 - \epsilon}{2})) \leq \chi^2(P \| Q) =  \frac{4\epsilon^2}{1 - \epsilon^2} = \cO(\epsilon^2)$ using $\chi^2$-upper bound.} Therefore, we have $I(J; S) \leq N \max_{i, j} \textup{KL}(\cD_{v^{(i)}} \| \cD_{v^{(j)}}) \leq  N s (4\epsilon^2)$.
    \end{itemize}
    
    To make Fano's inequality valid, we need to choose $\epsilon$ such that
    \[
        \frac{4Ns \epsilon^2 + \log 2}{\frac{s}{8}\log 2} \leq \frac{1}{2} \Leftrightarrow 4Ns\epsilon^2 \leq (\frac{s}{16} - 1)\log 2.
    \]
    We simply choose $\epsilon \asymp \frac{1}{\sqrt{N}}$. Combining with the claim from Steps 3, 4, and applying Theorem~\ref{thm:fano}, we have
    \begin{equation*}
        \begin{aligned}
            \inf_{\pi}\sup_{\cD \in \Delta(\mathcal{P})} \bbE_{S \sim \cD^N}[U(\pi^*(\cD), \cD) - U(\pi(S), \cD)]
            \geq &\frac{\epsilon}{2}\inf_{\pi} \sup_{v \in \cV} \bbE_{S \sim \cD_v^N}\|\pi^*(\cD_v) - \pi(S)\|_1 = \Omega \left(\frac{1}{\sqrt{N}} \cdot \sigma s\right),
        \end{aligned}
    \end{equation*}
    which concludes the proof.
\qed

\subsection{Minimax Optimality via Stochastic Gradient Ascent} \label{apx:SGA}
In this section, we will present the formal proof for Theorem \ref{thm:minimax-SGA}.

\begin{customthm}{\ref{thm:minimax-SGA} (restated)}
    If the SGA algorithm (Algorithm \ref{alg:sGA}) is run for $N$ iterations with a constant step size $\eta = \frac{\pi_{\max}}{2B\sqrt{N}}$, the expected excess risk of the learned (averaged) Lagrangian multipliers $\bar{\pi}_N$ is upper-bounded by
    \[
        \bbE_{S \sim \cD^N}[\mathcal{E}(\bar{\pi}_N)] \leq \frac{2B\pi_{\max} s}{\sqrt{N}} = \cO\left(\frac{s}{\sqrt{N}}\right).
    \]
\end{customthm}
\proof[Proof of Theorem~\ref{thm:minimax-SGA}]
    Let $U(\pi) = \bbE_{P \sim \cD}[u(\pi, P)]$ denote the expected utility function corresponding to Lagrangian variables $\pi$. Since $u(\cdot, P)$ is concave for any problem instance $P$, $U(\pi)$ is also concave. Besides, recall that $\pi^*(\cD) \in \arg \max_{\pi \in \Pi} U(\pi)$ is the optimal Lagrangian variable  corresponding to the problem distribution $\cD$.

    From the property of the projection operator onto a convex set $\Pi$, for any $t  \in \{1, \dots, N\}$, we have
    \begin{equation} \label{eq:descent-lemma}
        \begin{aligned}
            &\|\pi_{t + 1} - \pi^*(\cD)\|_2^2 \leq \|\pi_{t} + \eta g_t - \pi^*(\cD)\|_2^2 = \|\pi_t - \pi^*(\cD)\|_2^2 + 2\eta g_t^\top (\pi_t - \pi^*(\cD)) +\eta^2 \|g_t\|_2^2 \\
            \Rightarrow & \quad g_t^\top (\pi^*(\cD) - \pi_t) \leq \frac{\|\pi_t - \pi^*(\cD)\|_2^2 - \|\pi_{t + 1} - \pi^*(\cD)\|_2^2}{2\eta} + \frac{\eta}{2} \|g_t\|_2^2.
        \end{aligned}
    \end{equation}
    Note that $u(\cdot, P_t)$ is a concave function, therefore $u(\pi^*(\cD), P_t) - u(\pi_t, P_t) \leq g^\top_t (\pi^*(\cD) - \pi_t)$. Note that $\pi_t$ is constructed using only the historical instances $\{P_1, \dots, P_{t - 1}\}$, hence is independent with $P_{t}$. Therefore, if we take the expectation over all problem instances $P_1, \dots, P_N \sim \cD$, then $\bbE_{P_1,\dots, P_N}[u(\pi_t, P)] = \bbE_{P_1,\dots, P_N}[U(\pi_t)]$, and $\bbE_{P_1,\dots, P_N}[u(\pi^*(\cD), P)] = U(\pi^*)$ since $\pi^*$ is deterministic. Combining the above with Equation \ref{eq:descent-lemma}, we have
    \begin{equation*}
        \bbE_{P_1,\dots, P_N}[U(\pi^*) - U(\pi_t)] \leq \bbE_{P_1,\dots, P_N}\left[\frac{\|\pi_t - \pi^*(\cD)\|_2^2 - \|\pi_{t+ 1} - \pi^*(\cD)\|_2^2}{2\eta} + \frac{\eta}{2}\|g_t\|_2^2\right].
    \end{equation*}
    Taking the sum over $N$ time steps, we have
    \begin{equation*}
        \begin{aligned}
            \bbE_{P_1,\dots, P_N}\left[N U(\pi^*) - \sum_{t = 1}^N U(\pi_t)\right] &\leq  \bbE_{P_1,\dots, P_N} \left[\frac{\|\pi_1 - \pi^*(\cD)\|_2^2 - \|\pi_{N + 1} - \pi^*(\cD)\|_2^2}{2\eta} + \frac{\eta}{2}\sum_{t = 1}^N \|g_t\|_2^2\right] \\
            &\leq  \frac{\|\pi_1 - \pi^*(\cD)\|_2^2}{2\eta} + \bbE_{P_1,\dots, P_N}\left[\sum_{t = 1}^N \|g_t\|_2^2\right],
        \end{aligned}
    \end{equation*}
    where we use the fact that $\pi_1$ is fixed, so $\bbE_{P_1,\dots, P_N}\|\pi_1 - \pi^*(\cD)\|_2^2 = \|\pi_1 - \pi^*(\cD)\|_2^2$. Now note that $\|g_t\|_2 \leq 2B\sqrt{s}$, and $\|\pi_1 - \pi^*(\cD)\|_2^2 \leq s\pi_{\max}^2$, we have
    \[
        \bbE_{P_1,\dots, P_N}\left[U(\pi^*) - \sum_{t = 1}^N U(\pi_t)\right] \leq \frac{s\pi_{\max}^2}{2\eta} + 2\eta NB^2s.
    \]
    Now, since $U(\cdot)$ is a concave function, using Jensen's inequality, we have $\frac{1}{N}\sum_{t = 1}^N U(\pi_t) \leq U\left(\frac{1}{N}\sum_{t =1}^N \pi_t\right) = U(\overline{\pi}_N)$, and therefore
    \[
       N \bbE[\cE(\bar{\pi}_N)] =  N \bbE_{P_1, \dots, P_N}[U(\pi^*(\cD)) - U(\bar{\pi}_N)] \leq \frac{s\pi_{\max}^2}{2\eta} + 2\eta NB^2s.
    \]
    Choosing $\eta = \frac{\pi_{\max}}{2B\sqrt{N}}$, we have the final conclusion. 
\qed

\section{Proofs for Section \ref{sec:warm-start-lagrangian-relaxation}} \label{apx:warm-start-lagrangian-relaxation}
\subsection{Upper Bound} \label{apx:upper-bound-warm-start}
We now present the detailed proof for the generalization guarantee upper-bound for the problem of learning to warm-start Lagrangian relaxation. 

\proof[Proof of Theorem~\ref{thm:upper-bound-warm-start}]
    The proof simply relies on the closed-form solution of the ERM estimator for the squared loss. First, given an initialization $\phi \in \Pi$, the risk $R(\phi) = \bbE_{P \sim \cD}[\|\phi - \pi^*(P)\|_2^2]$. Therefore, given a problem distribution $\cD$, the optimal initialization $\phi^*(\cD)$ corresponding to $\cD$ is simply $\phi^*(\cD) = \bbE_{P \sim \cD}[\pi^*(P)]$. Therefore, the excessive can be written as 
    \[
        \bbE_{P \sim \cD}[\|\hat{\phi}(S) - \pi^*(P)\|_2^2] - \bbE_{P \sim \cD}[\|\pi^*(\cD) - \pi^*(P)\|] = \|\hat{\phi}(S) - \phi^*\|_2^2.
    \]
    Let $Z_i = \pi^*(P_i)$, and from Assumption \ref{asmp:restricted-search-main}, the $s$-dimensional random vector $Z_i$ are i.i.d.~with mean $\phi^*(\cD)$ and belongs to the bounded domain $Z_i \in [0, \pi_\textup{max}]^s$ due to Assumption \ref{asmp:restricted-search-main}. { Therefore, applying the standard Popoviciu's inequality on variance (Lemma \ref{lm:popoviciu}), we have}
    \[
        \bbE_{P \sim \cD} = \frac{1}{N}\sum_{i = 1}^N \sum_{k = 1}^s \textup{Var}(Z_{i, k})\leq \frac{s \pi^2_\textup{max}}{4N} = \cO\left(\frac{s}{N}\right),
    \]
    where $Z_{i, k}$ is the $k$-th coordinate of $Z_i$.
\qed

\subsection{Lower Bound}
We now present a detailed proof of the minimax lower bound for the problem of learning to warm-start Lagrangian relaxation. 

\begin{customthm}{\ref{thm:minimax-lower-bound-warm-start} (restated)}
    For any learning algorithm $\hat{\phi}$ taking input as a set $S$ of $N$ problem instances $P_1, \dots, P_N \sim \cD$, the minimax expected risk is lower-bounded by
    \[
        \inf_{\phi} \sup_{\cD \in \Delta(\cP)}\bbE_{S \sim \cD^N}\left[ \bbE_{P \sim \cD}[\| \phi(S) - \pi^*(P)\|_2^2] - \bbE_{P \sim \cD}[\|\phi^* - \pi^*(P)\|_2^2] \right] = \Omega\left(\frac{s}{N}\right). 
    \]
\end{customthm}
\begin{proof}[Proof of Theorem~\ref{thm:minimax-lower-bound-warm-start}]
    Similar to Theorem \ref{thm:minimax-lower-bound}, our approach is to leverage Fano's method (Theorem \ref{thm:fano}). However, since the loss here is quadratic, the overall proof can be simplified drastically. We proceed with the following steps.

    \textbf{Step 1: Construction of hard instances.} In this construction, we also leverage the idea from the previous case (Theorem \ref{thm:minimax-lower-bound}), by restricting the problem instance to take only the form $P = (c, \mathbf{I}_s, \frac{1}{2}\cdot \mathbf{1}_s, 0_{t \times s}, 0_t)$. From Lemma \ref{lm:lower-bound-step-1}, we show that problem instance $P$ of this form has the property $u(\pi, P) = \sum_{k = 1}^s \min(\frac{\pi_k}{2}, c_k - \frac{\pi_k}{2})$. This function is strictly concave and is maximized uniquely when $\frac{\pi_k}{2} = c_k - \frac{\pi_k}{2}$, if $c_k > 0$ since the Lagrangian multiplier has to be positive. Therefore, for any $c$ such that $c_k \geq 0$ for all $k$, we have $\pi^*(P) = c$.

     \textbf{Step 2: Family of distribution construction.} We define a family of distributions for $c$, parameterized by a binary vector $v \in \{0, 1\}^s$ and a small perturbation scale $\epsilon \in (0, 1/2)$ that we will choose later.
     \begin{itemize}
         \item Support: for any $k$, $c_k$ takes value in $\{1, 2\}$.
         \item Probability: the probabilities are controlled by $v$ and $\epsilon$ as follows
         \begin{itemize}
             \item If $v_k = 1$: $\bbP(c_m = 2) = \frac{1 + \epsilon}{2}$ and $\bbP(c_k = 1) = \frac{1 - \epsilon}{2}$.
             \item If $v_k = 0$: $\bbP(c_k = 2) = \frac{1 + \epsilon}{2}$ and $\bbP(c_k = 1) = \frac{1 + \epsilon}{2}$.
         \end{itemize}
     \end{itemize}
     Then, for a problem distribution $\cD_{v}$ constructed this way, we have $(\phi^*(\cD_v))_k = \bbE[c_k] = \frac{3}{2} + \frac{\epsilon}{2}(2v_k - 1)$
     
     \textbf{Step 4: Minimax lower-bound.} From Varshamov-Gilbert bound in Lemma~\ref{lm:Varshamov-Gilbert-bound}, there is a packing set $\cV = \{v^{(1)}, \dots, v^{(M)}\} \subset \{0, 1\}^s$ with $M \geq 2^{s / 8}$ such that $d_H(v^{(i)}, v^{(j)}) \geq s / 8$. Therefore, for any two $v^{(i)}, v^{(j)} \in \cV$, we have 
     \begin{equation} \label{eq:lower-bound-warm-start-critical-ineq}
         \|\phi^*(\cD_{v^{(j)}}) - \phi^*(\cD_{v^{(i)}})\|_2^2 = \sum_{k = 1}^s\left(\frac{\epsilon}{2}(2v_k^{(j)}) - 2v_{k}^{(i)})\right)^2 = \epsilon^2 \cdot d_H(v^{(j)}, v^{(i)}) \geq \frac{\epsilon^2 s}{8}.
     \end{equation}

     Again, we note that the KL divergence for the Bernoulli distributions with parameter $p, q \in [\frac{1 - \epsilon}{2}, \frac{1 + \epsilon}{2}]$ satisfies $\textup{KL}(p \parallel q) \leq \frac{(p - q)^2}{q(1 - q)} \leq 4 \epsilon^2$. Therefore,
     \[
        \textup{KL}(\cD_{v^{(i)}}^N \parallel\cD_{v^{(j)}}^N) \leq 4N\sum_{k = 1}^s (p_k^{(j)} - p_k^{(i)})^2 = 4N\epsilon^2 d_H(v^{(i)}, v^{(j)}) \leq 4Ns\epsilon^2.
     \]
     We choose $\epsilon$ such that
     \[
        4Ns\epsilon^2 \leq \frac{s}{16}\log 2 \Rightarrow \epsilon \asymp \frac{1}{\sqrt{N}}.
     \]
     Substituting to~\eqref{eq:lower-bound-warm-start-critical-ineq}, we conclude that 
     \[
        \inf_{\phi} \sup_{\cD \in \Delta(\cP)}\bbE_{S \sim \cD^N}\left[ \bbE_{P \sim \cD}[\| \phi(S) - \pi^*(P)\|_2^2] - \bbE_{P \sim \cD}[\|\phi^* - \pi^*(P)\|_2^2] \right] = \Omega\left(\frac{s}{N}\right).
     \]

     The proof is complete.
\end{proof}

\end{document}